\documentclass[10pt]{article} 
\usepackage[preprint]{tmlr}

\usepackage{amsmath,amsfonts,bm}

\def\eqref#1{equation~\ref{#1}}

\def\1{\bm{1}}

\DeclareMathAlphabet{\mathsfit}{\encodingdefault}{\sfdefault}{m}{sl}
\SetMathAlphabet{\mathsfit}{bold}{\encodingdefault}{\sfdefault}{bx}{n}

\usepackage{amsthm}
\usepackage{hyperref}
\usepackage{url}
\usepackage{algorithm,xcolor}
\usepackage{algpseudocode}
\usepackage{graphicx}
\usepackage{subcaption}
\usepackage{caption}

\newtheorem{theorem}{Theorem}[section]
\newtheorem{proposition}[theorem]{Proposition}

\title{Gradient-Discrepancy Acquisition for Pool-Based Active Learning}

\author{\name Mohamadsadegh Khosravani \email mko41@uregina.ca \\
      \addr Department of Computer Science\\
      University of Regina
      \AND
      \name Sandra Zilles \email zilles@uregina.ca \\
      \addr University of Regina
      }

\def\month{MM}  
\def\year{YYYY} 
\def\openreview{\url{https://openreview.net/forum?id=XXXX}} 

\begin{document}

\maketitle

\begin{abstract}
The effectiveness of 
active learning hinges on the choice of the acquisition criterion by which a learning algorithm selects potentially informative data points whose label is subsequently queried.  
This paper proposes a novel gradient-based acquisition criterion, derived from a formal generalization bound proven by Luo et al.\ (2022). This criterion can be applied as an alternative to uncertainty measures in uncertainty sampling. We provide a theoretical justification of the proposed acquisition criterion, and demonstrate its effectiveness in an empirical evaluation. 
\end{abstract}
\section{Introduction}

In many supervised learning applications, labels are expensive to obtain while large unlabeled datasets are readily available. Pool-based active learning (AL) addresses this setting by iteratively selecting a small batch of unlabeled points to be labeled by an oracle, with the goal of maximizing test performance under a limited labeling budget~\citep{settles2009active}. A central design choice in AL is the acquisition function used to decide which points to label.

Two families of acquisition functions are widely used. \emph{Uncertainty-based} methods query points the current model finds ambiguous, using statistics of the predictive distribution such as entropy or margin~\citep{settles2009active,gal2017deep}. \emph{Diversity/coverage-based} methods aim to spread queries so as to sample from the unlabeled pool across a large variety of areas within either the data space or the feature space, for example via a core-set style selection 
\citep{sener2017active}. While effective in many settings, both uncertainty-based and diversity-based heuristics can be sensitive to dataset shift and to the choice of model and experimental setting~\citep{ovadia2019can,lowell2019practical}.

Another perspective uses gradients to guide acquisition, treating the gradient induced by a candidate point as a representation of the training signal or model update that point would produce. This perspective is largely orthogonal to the uncertainty/diversity taxonomy: some gradient-based methods are uncertainty-like, while others use gradients to promote diversity. Expected Gradient Length (EGL) scores an unlabeled point by the magnitude of the update it is expected to induce, using an expectation over possible labels~\citep{NIPS2007_a1519de5}. For deep models, gradient-based acquisition is often implemented through \emph{last-layer gradient embeddings}, which are efficient to compute and compare. BADGE computes pseudo-labeled last-layer gradients and then selects a batch using clustering (k-means++) in gradient space to encourage within-batch diversity~\citep{Ash2020}. Related ideas include gradient matching for subset selection~\citep{DBLP:journals/corr/abs-2103-00123}, and gradient-based influence measures for identifying informative examples~\citep{DBLP:journals/corr/abs-2107-07075}. Note that gradient matching is commonly studied as an \emph{offline} subset selection objective.

\textbf{Our approach.}
In this paper, we propose a \emph{gradient-discrepancy} acquisition heuristic inspired by a term that appears in a gradient-based generalization bound~\citep{luo2022generalization}. We do not claim that directly optimizing this bound term yields a provable reduction in the generalization gap for deep networks. Instead, we use the bound-motivated quantity as a practical scoring rule.

For each unlabeled point, we compute a pseudo-labeled last-layer gradient embedding and measure its discrepancy to a reference statistic computed from the labeled set (e.g., the mean labeled gradient embedding). Intuitively, we prioritize points whose gradients are most misaligned with what the current labeled set ``explains,'' treating larger discrepancy as a proxy for informativeness. This yields the acquisition rule \texttt{grad}, which selects the top-$b$ points by the discrepancy score.

\textbf{Contributions.}
Our main contributions are:
\begin{itemize}
  \item We define a gradient-discrepancy acquisition score inspired by the theoretical study of \cite{luo2022generalization}, and implement it efficiently using pseudo-labeled last-layer gradient embeddings.
  \item We propose a simple batch selection rule, \texttt{grad}, derived from the discrepancy score.
  \item We evaluate this strategy across multiple datasets, architectures, and acquisition batch sizes, and compare against common baselines including uncertainty sampling, BADGE, and core-set selection.
  Our evaluation shows that our gradient-discrepancy score typically outperforms entropy-based scoring in uncertainty sampling, and is overall comparable to other standard active learning methods, in terms of resulting predictive accuracy.
\end{itemize}

\paragraph{Method taxonomy.}
Table~\ref{tab:al_taxonomy} summarizes how our discrepancy criterion differs from common uncertainty-, diversity-, and gradient-embedding-based baselines.

\begin{table}[t]
\centering
\small
\setlength{\tabcolsep}{4pt}
\begin{tabular}{l p{3.6cm} p{2.9cm} p{3.8cm}}
\hline
\textbf{Method} & \textbf{Signal} & \textbf{Selection mechanism} & \textbf{Key intuition} \\
\hline
Entropy / Margin & predictive uncertainty & top-$b$ by uncertainty & query ambiguous points \\
Core-set / K-center~\citep{sener2017active} & feature-space distances & greedy coverage & cover the pool in representation space \\
EGL~\citep{NIPS2007_a1519de5} & expected gradient \emph{magnitude} & top-$b$ by score & maximize expected model change \\
BADGE~\citep{Ash2020} & pseudo-labeled gradient embeddings & k-means++ in gradient space & promote within-batch diversity in gradient space \\
\hline
\texttt{grad} (ours) & gradient \emph{discrepancy to labeled reference} & top-$b$ by discrepancy & query points most misaligned with labeled-set gradient statistics \\

\hline
\end{tabular}
\caption{Taxonomy of acquisition strategies. While BADGE and our method both use last-layer gradient embeddings, BADGE selects a diverse batch via clustering, whereas we score points by discrepancy to a labeled-set reference statistic.}
\label{tab:al_taxonomy}
\end{table}

\section{Preliminaries}
\label{Preliminaries}
\subsection{Pool-based Active Learning}

We consider pool-based active learning for supervised classification.
Let $\mathcal{D}=\{(x_i,y_i)\}_{i=1}^N$ denote the underlying dataset, where labels $y_i$ are initially unknown to the learner except for a small labeled subset.
At acquisition round $t$, the learner maintains a labeled set $\mathcal{D}_L^{(t)}$ and an unlabeled pool $\mathcal{D}_U^{(t)}$ with $\mathcal{D}_L^{(t)} \cup \mathcal{D}_U^{(t)}=\mathcal{D}$ and $\mathcal{D}_L^{(t)} \cap \mathcal{D}_U^{(t)}=\emptyset$.
Given a labeling budget $b$, an acquisition function selects a batch $\mathcal{B}^{(t)} \subset \mathcal{D}_U^{(t)}$ with $|\mathcal{B}^{(t)}|=b$ to be labeled by an oracle.
The labeled set is then updated as $\mathcal{D}_L^{(t+1)}=\mathcal{D}_L^{(t)}\cup \mathcal{B}^{(t)}$, and the unlabeled pool becomes $\mathcal{D}_U^{(t+1)}=\mathcal{D}_U^{(t)}\setminus \mathcal{B}^{(t)}$.
At each round, a model is trained using $\mathcal{D}_L^{(t)}$, and performance is evaluated on a held-out test set.

\subsection{Model, Loss, and Gradient}

Let $p_\theta(y\mid x)$ be a parametric predictive model with parameters $\theta \in \mathbb{R}^d$.
For a labeled example $(x,y)$, let $\ell(\theta;x,y)$ denote a differentiable training loss (e.g., cross-entropy).
For any finite set of labeled examples $S=\{(x_i,y_i)\}_{i=1}^{|S|}$, define the empirical objective
\begin{equation}
f(\theta; S) \;=\; \frac{1}{|S|}\sum_{(x_i,y_i)\in S}\ell(\theta;x_i,y_i),
\end{equation}
and its gradient
\begin{equation}
\nabla f(\theta; S) \;=\; \nabla_\theta f(\theta; S)
\;=\; \frac{1}{|S|}\sum_{(x_i,y_i)\in S}\nabla_\theta \ell(\theta;x_i,y_i).
\end{equation}

For an unlabeled point $x \in \mathcal{D}_U^{(t)}$, we use a pseudo-label $\hat{y}_\theta(x)$ induced by the current model, and write $\hat{S}=\{(x,\hat{y}_\theta(x))\}$ when treating $x$ as a singleton labeled set for scoring purposes.
Unless stated otherwise, all gradients are taken with respect to the full parameter vector $\theta$.

\subsection{\texorpdfstring{$\mathcal{DF}$}{DF} Discrepancy and Pseudo-Labeling}
\label{subsection: discrepancy}
\paragraph{Pseudo-labeling.}
For an unlabeled point $x \in \mathcal{D}_U^{(t)}$, we assign a deterministic pseudo-label using the current model:
\begin{equation}
\hat{y}_{\theta}(x) \;=\; \arg\max_{y} \; p_{\theta}(y\mid x).
\end{equation}
We denote the corresponding pseudo-labeled singleton set by
\begin{equation}
\hat{S}_{\theta}(x) \;=\; \{(x,\hat{y}_{\theta}(x))\}.
\end{equation}

\paragraph{Gradient-discrepancy function.}
Given two (pseudo-)labeled sets $S$ and $T$, we define the \emph{gradient discrepancy} function
\begin{equation}
\mathcal{DF}_{\theta}(S,T) \;=\; \nabla f(\theta; S) \;-\; \nabla f(\theta; T),
\end{equation}
where $\nabla f(\theta; S)$ denotes the gradient of the empirical objective on $S$ with respect to $\theta$.
When the dependence on $\theta$ is clear from context (e.g., $\theta=\theta_t$ at acquisition round $t$), we write $\mathcal{DF}(S,T)$ for brevity.

\subsection{A Generalization Bound Based on Gradients}

We briefly review a gradient-based generalization bound by \cite{luo2022generalization}, which will serve as
\emph{inspiration} for the discrepancy primitive used later in our acquisition rules.

Let $S=(z_1,\ldots,z_n)\sim \mathcal{D}^n$ be a training sample, and let $J=(j_1,\ldots,j_m)$ be a random
sequence of $m$ indices sampled uniformly from $[n]$. Denote by $S_J=(z_{j_1},\ldots,z_{j_m})$ the
corresponding subsequence, and let $S_I$ with $I=[n]\setminus J$ be the complementary subsequence. 
\cite{luo2022generalization} introduce Floor Gradient Descent (FGD) and Floor Stochastic Gradient Descent (FGSD),
gradient methods designed to facilitate generalization analysis.

In their setting, with probability at least $1-\delta$ over the draw of $S$ and the random index sequence $J$,
the iterated model $W_T$ after $T$ steps satisfies a bound of the form
\begin{equation}
R(W_T,\mathcal{D})
\;\le\;
\eta C_{\eta}\, R(W_T,S_I)
\;+\;
\frac{C_{\eta}\,\ln(1/\delta)+3}{n-m}
\;+\;
\frac{C_{\eta}\,\ln(dT)}{n-m}
\sum_{t=1}^{T}
\left(
\frac{\gamma_t^2}{\epsilon_t^2}\,
\left\|\mathcal{DF}(S,S_J,t)\right\|_2^2
\right),
\label{eq:luo_bound}
\end{equation}
where $R(\cdot,\cdot)$ denotes (empirical or population) risk, $d$ is the parameter dimension,
$\{\gamma_t\}_{t=1}^T$ is a step-size sequence, $\{\epsilon_t\}_{t=1}^T$ is a precision sequence specific to
FGD/FGSD, $\eta > 0 $ is a constant and $C_{\eta}$ is a constant depending on $\eta$ (see \citep{luo2022generalization} for details). In other words, the population risk after $T$ steps is bounded from above by a constant multiple of the empirical risk plus a function linear in $1/\delta$, plus a third term that grows with a cumulative \emph{gradient-discrepancy} term $\sum_{t=1}^{T}
\left(
\frac{\gamma_t^2}{\epsilon_t^2}\,
\left\|\mathcal{DF}(S,S_J,t)\right\|_2^2
\right)$, where
\begin{equation}
\mathcal{DF}(S,S_J,t)
\;:=\;
\nabla f(\theta_t; S)\;-\;\nabla f(\theta_t; S_J)
\;=\;
\mathcal{DF}_{\theta_t}(S,S_J),
\label{eq:df_in_bound}
\end{equation}
i.e., a difference between mean gradients of two sets evaluated at the current iterate $\theta_t$.

A key design idea of our method is to try to limit the growth of the gradient-discrepancy term. We emphasize that, in our work, the appearance of $\mathcal{DF}$ in \eqref{eq:luo_bound} is used as a
\emph{motivating structural primitive} for designing practical acquisition heuristics, rather than as a claim
that directly optimizing the bound term yields a guaranteed reduction in the generalization gap for deep
networks. 

\paragraph{A simplified form under contraction of $\mathcal{DF}$.}
A key term in \eqref{eq:luo_bound} is the cumulative gradient-discrepancy sum
$\sum_{t=1}^{T} \frac{\gamma_t^2}{\epsilon_t^2}\|\mathcal{DF}(S,S_J,t)\|_2^2$.
Motivated by the appearance of $\mathcal{DF}$ in \eqref{eq:luo_bound}, we consider a regime in which this
discrepancy becomes non-increasing across iterations. In particular, we can drastically simplify the gradient-discrepancy sum under the following assumption:

\paragraph{Assumption 1 (Monotone contraction of $\mathcal{DF}$).}
\label{ass:df_monotone}
There exists an iteration index $t_0 \in \{1,\ldots,T\}$ such that for all $t \ge t_0$,
\begin{equation}
\|\mathcal{DF}(S,S_J,t+1)\|_2 \;\le\; \|\mathcal{DF}(S,S_J,t)\|_2 .
\label{eq:df_monotone}
\end{equation}
Appendix~\ref{sec:appendix_contraction} provides sufficient conditions under which
Assumption 1 holds.

Under Assumption~$1$, we have $\|\mathcal{DF}(S,S_J,t)\|_2^2 \le \|\mathcal{DF}(S,S_J,t_0)\|_2^2$
for all $t \ge t_0$, and therefore
\begin{align}
\sum_{t=t_0}^{T} \frac{\gamma_t^2}{\epsilon_t^2}\,\|\mathcal{DF}(S,S_J,t)\|_2^2
&\le
\left(\max_{t \in \{t_0,\ldots,T\}} \frac{\gamma_t^2}{\epsilon_t^2}\right)
(T-t_0+1)\,\|\mathcal{DF}(S,S_J,t_0)\|_2^2 .
\label{eq:df_sum_bound}
\end{align}

Plugging \eqref{eq:df_sum_bound} into \eqref{eq:luo_bound} yields the simplified inequality
\begin{equation}
R(W_T,\mathcal{D})
\;\le\;
\eta C_{\eta}\, R(W_T,S_I)
\;+\;
\frac{C_{\eta}\,\ln(1/\delta)+3}{n-m}
\;+\;
\frac{C_{\eta}\,\ln(dT)}{n-m}\;
\kappa_{t_0}\;
\|\mathcal{DF}(S,S_J,t_0)\|_2^2,
\label{eq:luo_bound_simplified}
\end{equation}
where $\kappa_{t_0}:=(T-t_0+1)\max_{t\in\{t_0,\ldots,T\}}\frac{\gamma_t^2}{\epsilon_t^2}$.

\section{\texorpdfstring{$\mathcal{DF}$}{DF}-based Acquisition Strategies}

We now describe a pool-based acquisition strategy built from the $\mathcal{DF}$ discrepancy defined in
Section~\ref{subsection: discrepancy}. Conceptually, $\mathcal{DF}_{\theta}(S,T)$ measures a mismatch between
the gradient statistics induced by two (pseudo-) labeled sets under the current model. At acquisition round $t$,
the learner has a labeled set $\mathcal{D}_L^{(t)}$ and an unlabeled pool $\mathcal{D}_U^{(t)}$.
After training on $\mathcal{D}_L^{(t)}$ we obtain parameters $\theta_t$, which are then used to score candidates
in $\mathcal{D}_U^{(t)}$. Although the bound motivates reducing gradient discrepancy during training, in the active learning setting we use large discrepancy as a signal of under-representation: points whose gradients are far from the labeled-set reference are treated as candidates whose labels may help reduce this mismatch after acquisition and retraining.

\paragraph{Batch querying protocol (fair comparison).}
Labels are queried in batches: at round $t$, an acquisition rule selects a set
$\mathcal{B}^{(t)} \subset \mathcal{D}_U^{(t)}$ with $|\mathcal{B}^{(t)}|=b$ without access to oracle labels.
Only after the full batch is selected do we query the oracle once to obtain $\{y(x): x\in \mathcal{B}^{(t)}\}$.
Accordingly, whenever a label is required for scoring we use pseudo-labels
$\hat{y}_{\theta_t}(x)=\arg\max_y p_{\theta_t}(y\mid x)$.

\paragraph{$\mathcal{DF}$ score for a candidate point.}
For $x\in \mathcal{D}_U^{(t)}$, define the pseudo-labeled singleton
$\hat{S}_{\theta_t}(x)=\{(x,\hat{y}_{\theta_t}(x))\}$ and the augmented set
$S_x^{(t)}=\mathcal{D}_L^{(t)} \cup \hat{S}_{\theta_t}(x)$.
We score $x$ by the set-to-point discrepancy
\begin{equation}
s_t(x) \;=\; \left\| \mathcal{DF}_{\theta_t}\!\left(S_x^{(t)}, \hat{S}_{\theta_t}(x)\right) \right\|_2.
\label{eq:df_score}
\end{equation}
Intuitively, $s_t(x)$ is large when the training signal induced by the current labeled set differs from the
training signal induced by the candidate point (under its pseudo-label), and we treat such points as more
informative (see Algorithm~\ref{alg:grad}).
\paragraph{Relation to prior gradient-based acquisition.}
BADGE uses (pseudo-)labeled per-example gradient embeddings primarily to enforce within-batch diversity via clustering,
whereas we use gradients to define a \emph{reference-misalignment} score
$s_t(x)=\big\|\mathcal{DF}_{\theta_t}(S_x^{(t)},\hat{S}_{\theta_t}(x))\big\|_2$ and select the largest discrepancies.
EGL ranks points by the \emph{expected} gradient norm under the predictive distribution (requiring an expectation over labels),
while our score uses a single pseudo-label together with a reusable labeled reference statistic.

\begin{algorithm}[t]
\caption{Active learning with batch $\mathcal{DF}$ scoring (\texttt{grad})}
\label{alg:grad}
\begin{algorithmic}[1]
\State \textbf{Input:} labeled set $\mathcal{D}_L$, unlabeled pool $\mathcal{D}_U$, model $M$, oracle $O$, batch size $b$, rounds $n_r$
\For{$t=1$ \textbf{to} $n_r$}
    \State Train $M$ on $\mathcal{D}_L$ to obtain parameters $\theta_t$
    \ForAll{$x \in \mathcal{D}_U$}
        \State $\hat{y}\gets \hat{y}_{\theta_t}(x)$
        \State $\hat{S}(x)\gets \{(x,\hat{y})\}$
        \State $s_t(x)\gets \left\| \mathcal{DF}_{\theta_t}\!\left(\mathcal{D}_L \cup \hat{S}(x), \hat{S}(x)\right) \right\|_2$
        \Comment{implemented efficiently via a maintained reference statistic (Sec.~\ref{subsec:df_eff})}
    \EndFor
    \State $\mathcal{B} \gets$ top-$b$ points in $\mathcal{D}_U$ by $s_t(\cdot)$
    \State Query oracle labels $\{y(x):x\in\mathcal{B}\}$ from $O$
    \State $\mathcal{D}_L \gets \mathcal{D}_L \cup \{(x,y(x)):x\in\mathcal{B}\}$
    \State $\mathcal{D}_U \gets \mathcal{D}_U \setminus \mathcal{B}$
\EndFor
\end{algorithmic}
\end{algorithm}

\subsection{Efficient Computation of the \texorpdfstring{$\mathcal{DF}$}{DF} Score}
\label{subsec:df_eff}

Computing $s_t(x)=\left\|\mathcal{DF}_{\theta_t}\!\left(R\cup \hat{S}_{\theta_t}(x),\hat{S}_{\theta_t}(x)\right)\right\|_2$
does not require explicitly forming $R\cup \hat{S}_{\theta_t}(x)$ and recomputing full-set gradients for every candidate.
For the standard empirical-risk objective
$f(\theta;S)=|S|^{-1}\sum_{(x,y)\in S}\ell(\theta;x,y)$, the $\mathcal{DF}$ score admits an implementation using
(i) a maintained summary statistic of the reference set $R$ (e.g., its mean gradient embedding at $\theta_t$) and
(ii) per-example gradient embeddings for candidates, yielding an equivalent ranking up to an $x$-independent constant factor.
For objectives with non-additive aggregation across samples, this simplification need not hold; in that case one can compute
the $\mathcal{DF}$ score using the corresponding objective-level gradients or maintain the appropriate reference statistic.
In practice we often restrict embeddings to the last layer and optionally estimate the reference statistic with mini-batches;
runtime implications are reported in our running-time analysis.

\paragraph{Practical computation via gradient embeddings (full $\theta$ vs.\ last layer).}
Our definition of $\mathcal{DF}_{\theta}(S,T)$ applies to the full parameter vector $\theta\in\mathbb{R}^d$.
In practice, we compute $\mathcal{DF}$ using per-example gradient embeddings and often restrict the embedding
to a subset of parameters (typically the final layer) for efficiency.

Concretely, for a (pseudo-)labeled example $(x,\tilde{y})$, define the per-example gradient
\begin{equation}
g_{\theta}(x,\tilde{y}) \;:=\; \nabla_{\theta}\,\ell(\theta;x,\tilde{y}).
\end{equation}
When using last-layer embeddings, let $\theta^{(L)}$ denote the parameters of the final layer and define
\begin{equation}
g^{(L)}_{\theta}(x,\tilde{y}) \;:=\; \nabla_{\theta^{(L)}}\,\ell(\theta;x,\tilde{y}).
\end{equation}
We then compute the reference statistic from the corresponding mean of per-example (last-layer) gradients over $R$
and compute candidate embeddings from per-example (last-layer) gradients for $\hat{S}_{\theta_t}(x)$.

\paragraph{Estimating mean gradients.}
The reference statistic for a set $R$ (e.g., a mean gradient embedding at $\theta_t$) can be computed either
(i) by a full pass over the set, or (ii) by a mini-batch estimator, depending on the trade-off between accuracy and runtime.
In our experiments, we use the same choice consistently across acquisition rounds.

\section{Experimental Results}
In this section, we conduct experiments to evaluate the performance of our proposed acquisition score. The first subsection covers diversity exploration and out of distribution experiments, while the second part focuses on the active learning experiment.

\subsection{Qualitative Sanity Checks: Acquisition Geometry and Dataset Shift}
\label{subsec:vis-ablation}
This subsection provides qualitative diagnostics to clarify what our $\mathcal{DF}$-based batch rules select and why.
We visualize (i) the geometry of one-step acquisitions in input space and in final-layer gradient space, and
(ii) whether $\mathcal{DF}$ responds to dataset shift. These checks are meant to support interpretation of later learning
curves, not to serve as a quantitative diversity analysis.

\paragraph{Acquisition geometry on MNIST.}
We subsample $N{=}3{,}000$ MNIST examples, flatten each $28{\times}28$ image, and draw an initial labeled set
$I_0$ of size $|I_0|{=}10$ uniformly at random; the remaining points form the unlabeled pool $P$.
We train a simple feed-forward network (one hidden layer, linear output) on $I_0$ with cross-entropy.
For gradient-based methods ($\mathcal{DF}$ and \textsc{BADGE}), we compute gradients with respect to the final linear layer.

To isolate the effect of the acquisition rule, we fix the same $(I_0,P)$ and the same trained model parameters,
and perform a single batch acquisition with $B \in \{10,20,40\}$ using: \texttt{grad} (top-$B$ by $\mathcal{DF}$), Entropy (uncertainty sampling), and
\textsc{BADGE} (k-means++ seeding in pseudo-labeled gradient space~\citep{Ash2020}). All methods use pseudo-labels
when needed for scoring.

\paragraph{Input- and gradient-space visualizations.}
We project (i) flattened inputs and (ii) pseudo-labeled final-layer gradient embeddings to 2D via PCA, and overlay
the pool $P$ (grey), the initial labeled set $I_0$ (black), and the acquired batch (colored); see
Figure~\ref{fig:acq-geometry}. A consistent qualitative pattern is that Entropy tends to concentrate selections in a
relatively small region, whereas $\mathcal{DF}$-based rule spread selections more broadly. In this sense, \texttt{grad} behaves
like an uncertainty proxy (it targets points that are strongly ``misaligned'' with the labeled-set gradient signal),
but unlike entropy-based uncertainty it does not collapse into a tightly bounded region in these projections.
In both input and gradient space, $\mathcal{DF}$-based selections typically appear more dispersed than Entropy.
\textsc{BADGE} similarly tends to cover multiple regions in gradient space, consistent with its explicit diversity
mechanism.

\begin{figure}[t]
    \centering
    \begin{minipage}[b]{0.32\textwidth}
        \centering
        \includegraphics[width=\textwidth]{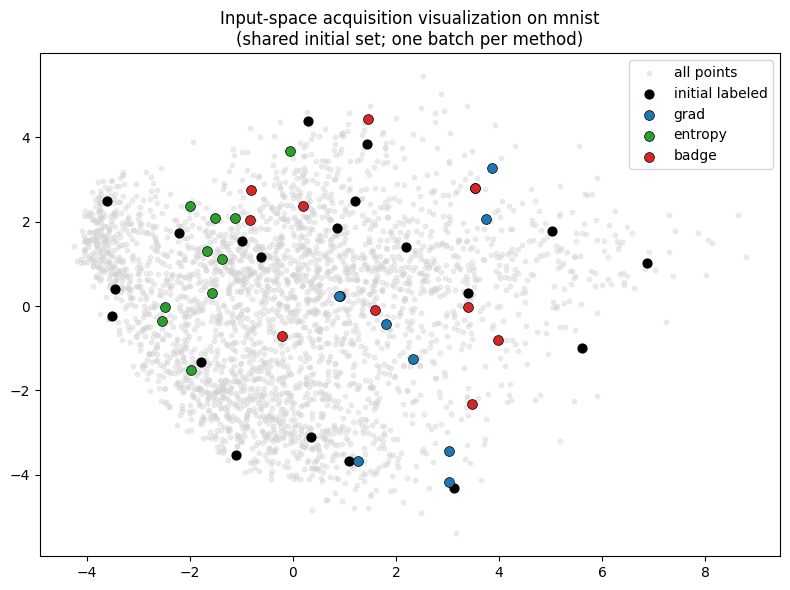}
        \caption*{Input space, $B{=}10$}
    \end{minipage}\hfill
    \begin{minipage}[b]{0.32\textwidth}
        \centering
        \includegraphics[width=\textwidth]{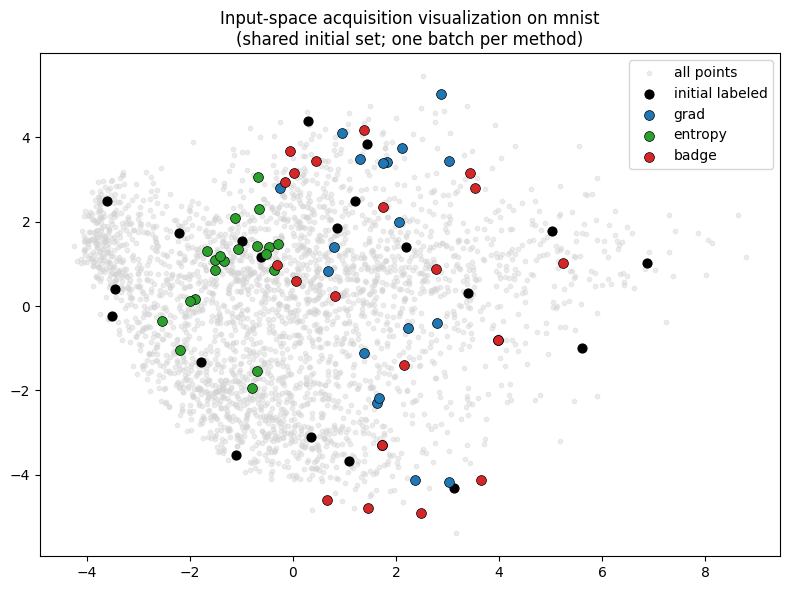}
        \caption*{Input space, $B{=}20$}
    \end{minipage}\hfill
    \begin{minipage}[b]{0.32\textwidth}
        \centering
        \includegraphics[width=\textwidth]{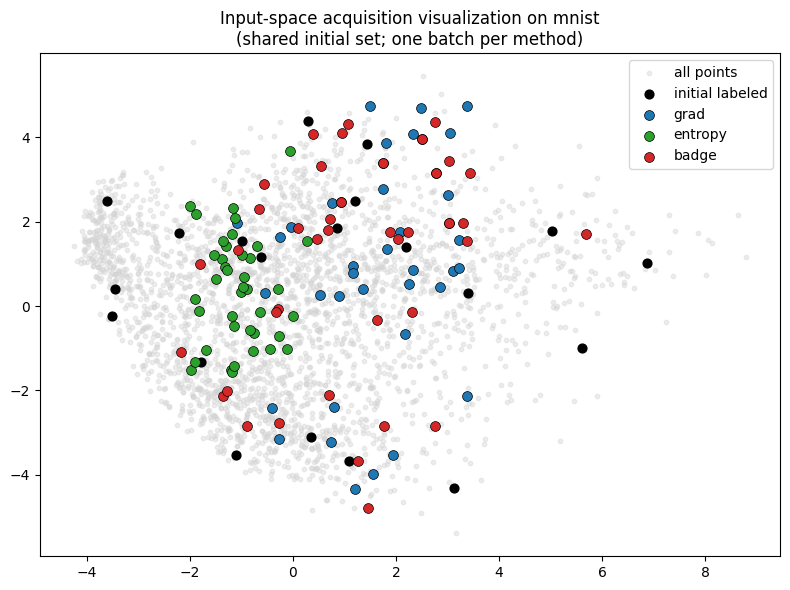}
        \caption*{Input space, $B{=}40$}
    \end{minipage}

    \vspace{0.35cm}

    \begin{minipage}[b]{0.32\textwidth}
        \centering
        \includegraphics[width=\textwidth]{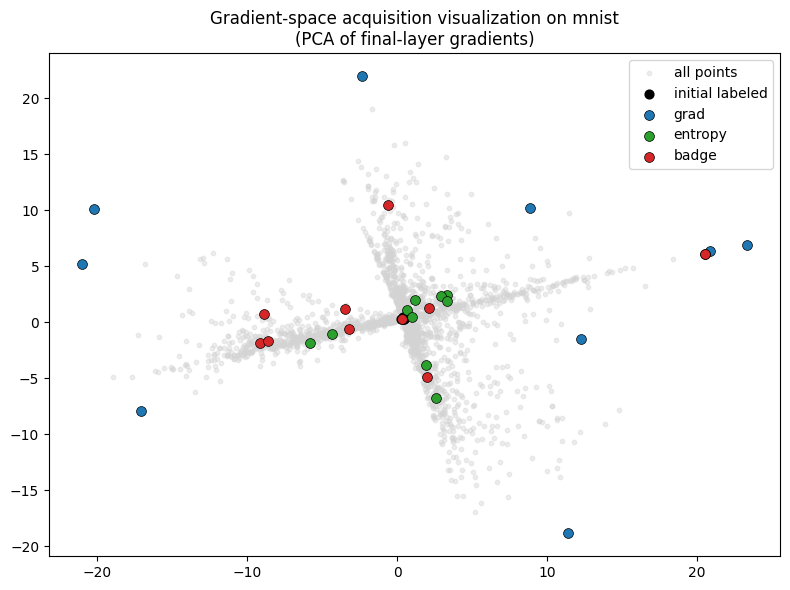}
        \caption*{Gradient space, $B{=}10$}
    \end{minipage}\hfill
    \begin{minipage}[b]{0.32\textwidth}
        \centering
        \includegraphics[width=\textwidth]{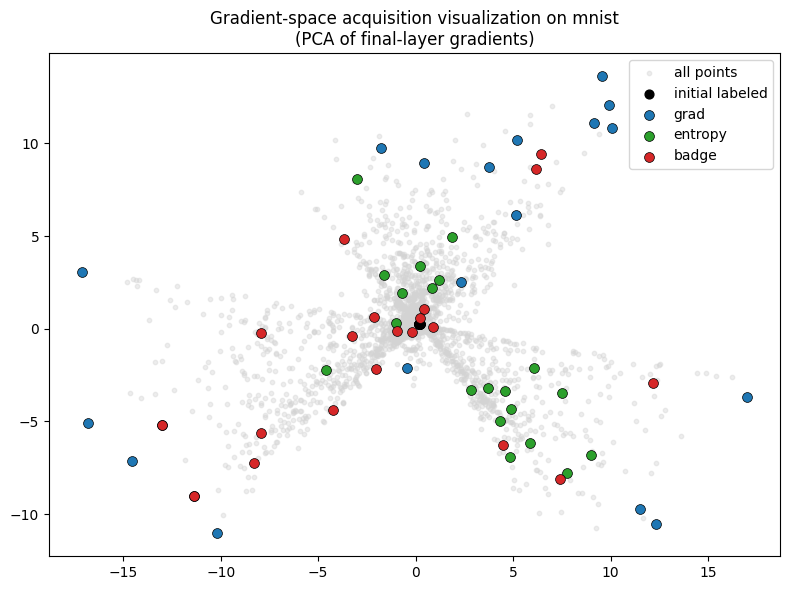}
        \caption*{Gradient space, $B{=}20$}
    \end{minipage}\hfill
    \begin{minipage}[b]{0.32\textwidth}
        \centering
        \includegraphics[width=\textwidth]{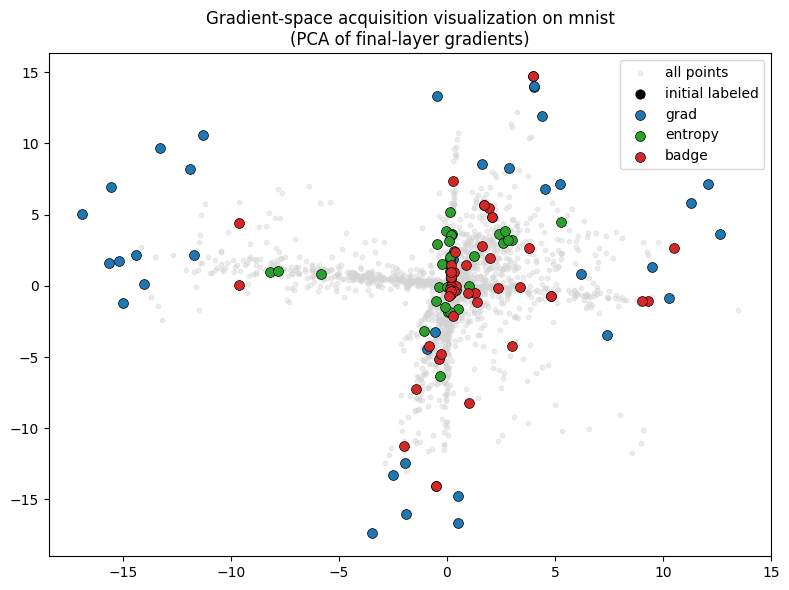}
        \caption*{Gradient space, $B{=}40$}
    \end{minipage}

    \caption{MNIST sanity check. Geometry of selected points in input space (top row) and final-layer gradient space (bottom row),
    using a shared initial labeled set and one acquisition step per method.}
    \label{fig:acq-geometry}
\end{figure}

\paragraph{\textbf{Dataset shift sensitivity}.}
We next probe whether $\mathcal{DF}$ changes under distribution shift. We train a model on the training set of CIFAR-10 and compute $\mathcal{DF}$ scores for two
evaluation sets of equal size ($n{=}10{,}000$): CIFAR-10 test (in-distribution) and SVHN test (out-of-distribution).
Figure~\ref{fig:df-shift} plots histograms of the resulting $\mathcal{DF}$ values. The out-of-distribution scores are shifted toward
larger values, suggesting that $\mathcal{DF}$ is sensitive to dataset mismatch in this setting and may therefore interact with robustness
and pseudo-label quality under shift.

\begin{figure}[t]
    \centering
    \includegraphics[width=0.62\textwidth]{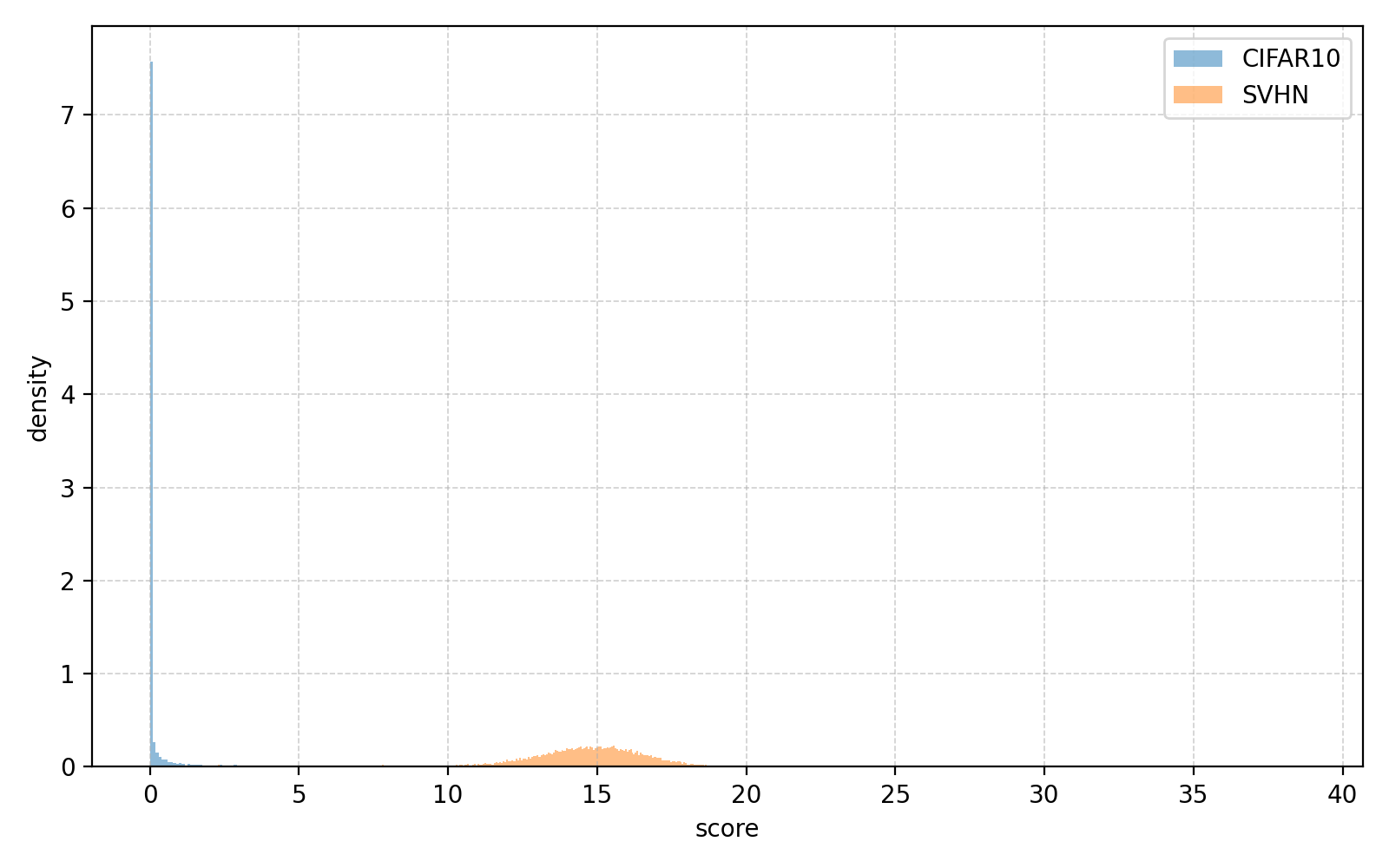}
    \caption{Histograms of $\mathcal{DF}$ scores on CIFAR-10 (in-distribution) and SVHN (out-of-distribution), each with $n{=}10{,}000$ examples.}
    \label{fig:df-shift}
\end{figure}

\subsection{Active Learning}
\label{sec:active_learning}

We consider the standard pool-based active learning (AL) protocol. At acquisition round $t$, given a labeled set $D_L^{(t)}$ and an unlabeled pool $D_U^{(t)}$, the learner selects a batch of $b$ unlabeled points, queries their labels, augments the labeled set, and retrains the model \emph{from scratch} on the updated labeled data. Unless stated otherwise, we report test accuracy as a function of acquisition round (equivalently, labeled budget). For fair comparison, within each random seed all methods share the same initial labeled set and the same unlabeled pool.

\paragraph{Compared methods.}
We compare our proposed gradient-based acquisition strategy (\textbf{grad} ) against four standard baselines: (i) uncertainty sampling via \emph{predictive entropy} under the current model; (ii) \emph{BADGE}, which forms a last-layer gradient embedding for each unlabeled point (using the model’s current predicted label) and selects a diverse batch via k-means++ seeding in this embedding space; (iii) \emph{random} sampling from the unlabeled pool; and (iv) \emph{K-center} selection, which greedily selects farthest-first points in the model’s representation space to encourage coverage~\citep{sener2017active}.

\subsection{Experimental Setting}
\label{sec:al_setting}

\paragraph{Datasets.}
We evaluate across modalities to reduce the likelihood that conclusions are driven by a single data type. We use:
(i) \textbf{text}: \textit{
20~Newsgroups}~\citep{lang1995newsweeder,qwone20newsgroups};
(ii) \textbf{tabular}: \textit{ISOLET} and \textit{OpenML 155 (pokerhand-normalized)}~\citep{uci_isolet_1991,uci_pokerhand_2002,vanschoren2013openml};
and (iii) \textbf{image}: \textit{CIFAR-10}, \textit{STL-10}, \textit{CINIC-10}, \textit{GTSRB}, and \textit{SVHN}~\citep{krizhevsky2009learning,stl10_dataset,darlow2018cinic10_dataset,stallkamp2012gtsrb,netzer2011svhn}.

\paragraph{Splits, pooling, and preprocessing.}
For 20~Newsgroups, we use the standard train/test split and represent documents using TF--IDF features with English stop-word removal after stripping headers, footers, and quoted text. For ISOLET and OpenML 155, we create a stratified 80/20 train/test split and standardize features using \texttt{StandardScaler}. For image datasets, we use official train/test splits for evaluation and form the unlabeled pool from the training split (optionally subsampled for computational efficiency). In particular, CIFAR-10 and SVHN use an unlabeled pool of 25{,}000 training examples obtained by uniform subsampling; SVHN uses the \texttt{train} split only (not \texttt{extra}). CINIC-10 uses the union of train and validation splits as the training source and is uniformly subsampled to 100{,}000 examples. All images are resized to $32\times 32$ and normalized with mean/std $(0.5,0.5,0.5)$.

\paragraph{Budgets and acquisition batch sizes.}
Let $|D_L^{(0)}|$ denote the initial labeled set size, $b$ the query batch size, and $T$ the number of acquisition rounds; the final labeled budget is $|D_L^{(T)}| = |D_L^{(0)}| + Tb$. We set $|D_L^{(0)}|=b$ throughout. For text/tabular datasets we use $(b,T)=(100,39)$. For image datasets we consider $(b,T)\in\{(100,14),(500,8),(1000,4)\}$ depending on dataset scale; the specific acquisition regime and backbone used in each experiment are indicated in the corresponding figure title/caption.

\paragraph{Model architectures.}
For text/tabular experiments, we use an MLP with two hidden layers of widths 512 and 256 followed by the final classification layer, trained end-to-end with backpropagation~\citep{rumelhart1986backprop}. For images, we use standard convolutional backbones (ResNet-18~\citep{he2016resnet}, VGG-16~\citep{simonyan2015vgg}, and LeNet~\citep{lecun1998gradient}) and replace the original classifier with a lightweight MLP head (512 and 256) before the final output layer.

\paragraph{\textbf{Training}.}
All models are implemented in PyTorch and trained with SGD (momentum 0.9, batch size 8) using cross-entropy loss.
For learning-rate selection, we create a fixed, stratified validation split from the training data once per dataset
(kept unchanged across methods and seeds), and we exclude this split from the unlabeled pool so it is never acquired.
We perform a one-time learning-rate sweep over $\{0.0001, 0.0005, 0.001, 0.005, 0.01\}$ and choose the rate with the
highest validation accuracy for the given dataset/backbone; this learning rate is then fixed for all acquisition rounds.
No learning-rate scheduling is used. The number of epochs is dataset-dependent (30--60). In the AL loop, models are
retrained from scratch after each acquisition round. No data augmentation is used. CUDA determinism is enabled to
improve run-to-run reproducibility.

\paragraph{Reporting across seeds.}
Each experiment is repeated with five random seeds. Curves report the mean test accuracy across seeds at each acquisition round; shaded regions indicate $\pm 1$ standard deviation when shown.

\subsection{Learning Curves}
\label{sec:al_learning_curves}

\begin{figure}[htbp]
    \centering

    \begin{minipage}[b]{0.49\textwidth}
        \centering
        \includegraphics[width=\linewidth]{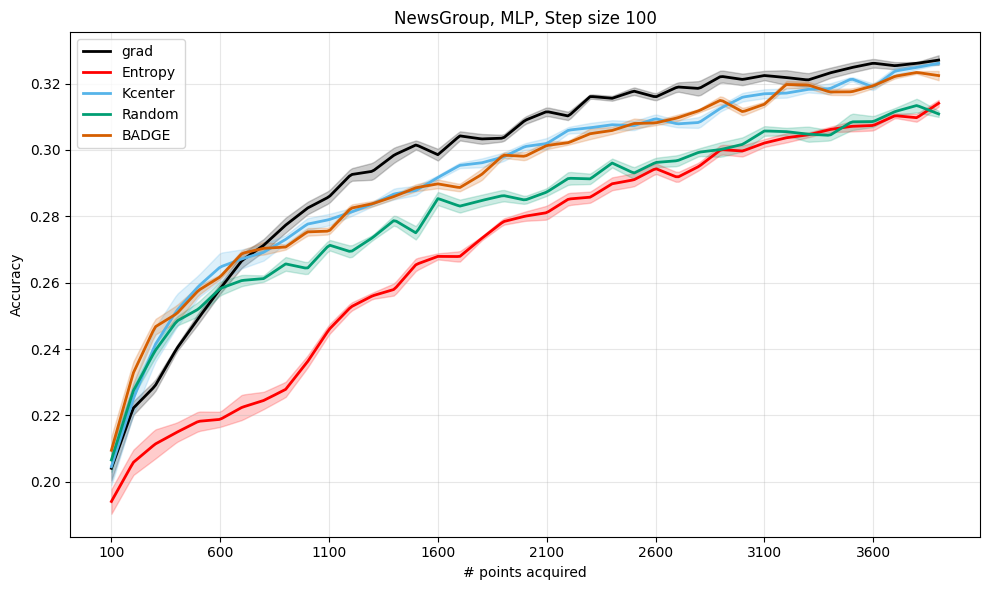}
        \caption*{(a)}
    \end{minipage}
    \hfill
    \begin{minipage}[b]{0.49\textwidth}
        \centering
        \includegraphics[width=\linewidth]{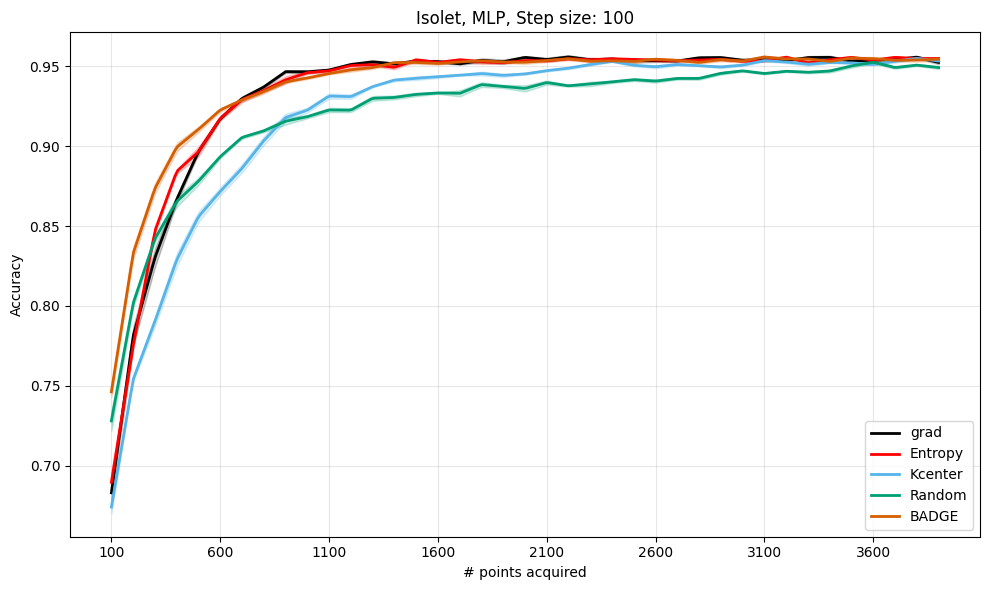}
        \caption*{(b)}
    \end{minipage}

    \vspace{0.5em}

    \begin{minipage}[b]{0.49\textwidth}
        \centering
        \includegraphics[width=\linewidth]{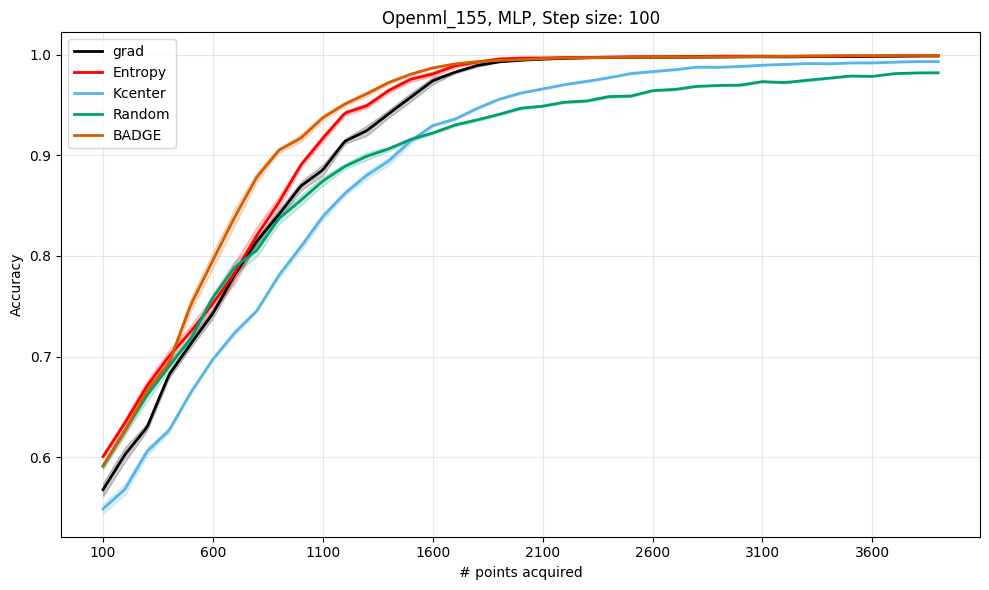}
        \caption*{(c)}
    \end{minipage}

    \caption{Active learning test accuracy, mean over 5 seeds, for text/tabular benchmarks with step size 100: 20 Newsgroups, ISOLET, and OpenML 155.}
    \label{fig:al_text_tabular}
\end{figure}

\paragraph{Text and tabular benchmarks.}
Figure~\ref{fig:al_text_tabular}  summarizes the active learning trajectories across three datasets. On 20~Newsgroups, the methods begin from comparable accuracy, but the gap becomes clearer as the budget increases: \texttt{grad} is typically among the most label-efficient methods, with K-center and BADGE remaining competitive, while Random and Entropy are generally less competitive. On ISOLET, performance differences are most visible in early-to-mid rounds; as labeling progresses, the curves converge, suggesting diminishing returns once sufficient labeled coverage is reached. On OpenML 155, acquisition choice has the largest impact in the mid-budget regime: BADGE and entropy-based uncertainty sampling often improve faster early on, while \textit{grad} remains consistently strong; at higher budgets, most informed strategies approach similar near-saturated accuracy, whereas Random continues to trail.

\begin{figure}[htbp]
    \centering

    \begin{minipage}[b]{0.48\textwidth}
        \centering
        \includegraphics[width=\linewidth, trim=20 10 20 10]{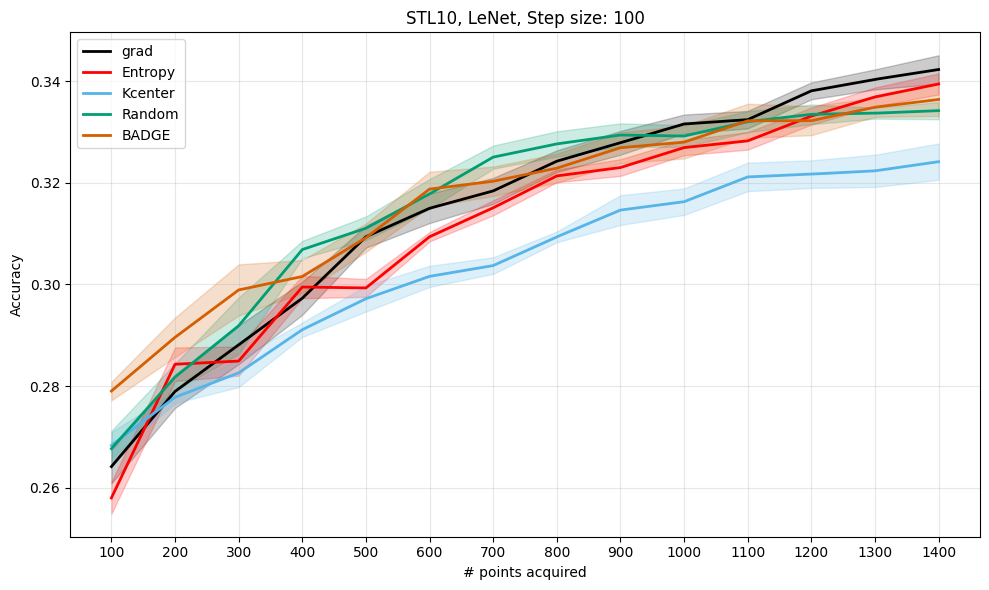}
        \caption*{(a)}
    \end{minipage}
    \hfill
    \begin{minipage}[b]{0.48\textwidth}
        \centering
        \includegraphics[width=\linewidth, trim=20 10 20 10]{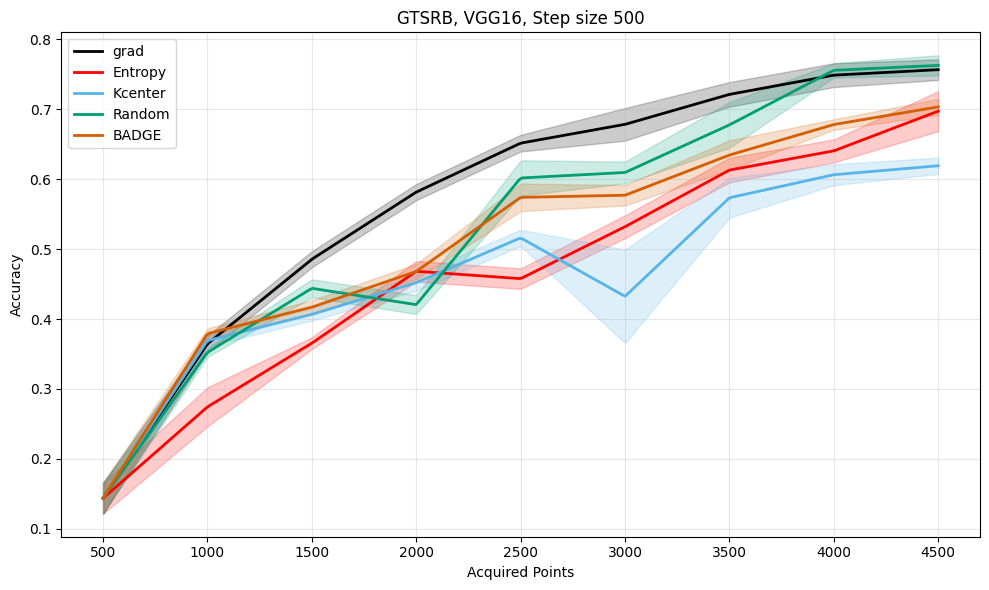}
        \caption*{(b)}
    \end{minipage}

    \vspace{0.4cm}

    \begin{minipage}[b]{0.48\textwidth}
        \centering
        \includegraphics[width=\linewidth, trim=20 10 20 10]{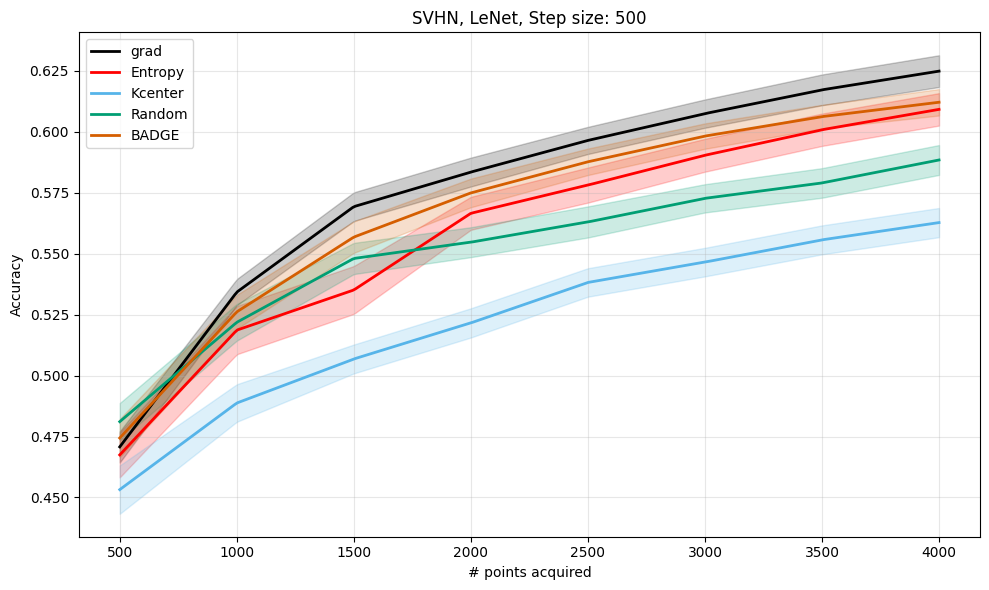}
        \caption*{(c)}
    \end{minipage}
    \hfill
    \begin{minipage}[b]{0.48\textwidth}
        \centering
        \includegraphics[width=\linewidth, trim=20 10 20 10]{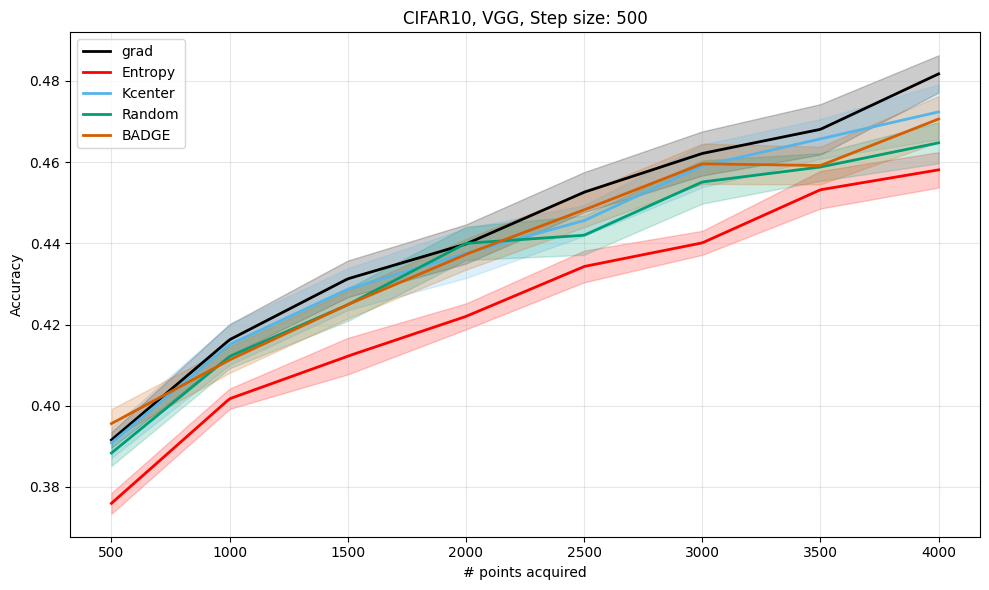}
        \caption*{(d)}
    \end{minipage}

    \vspace{0.4cm}

    \begin{minipage}[b]{0.48\textwidth}
        \centering
        \includegraphics[width=\linewidth, trim=20 10 20 10]{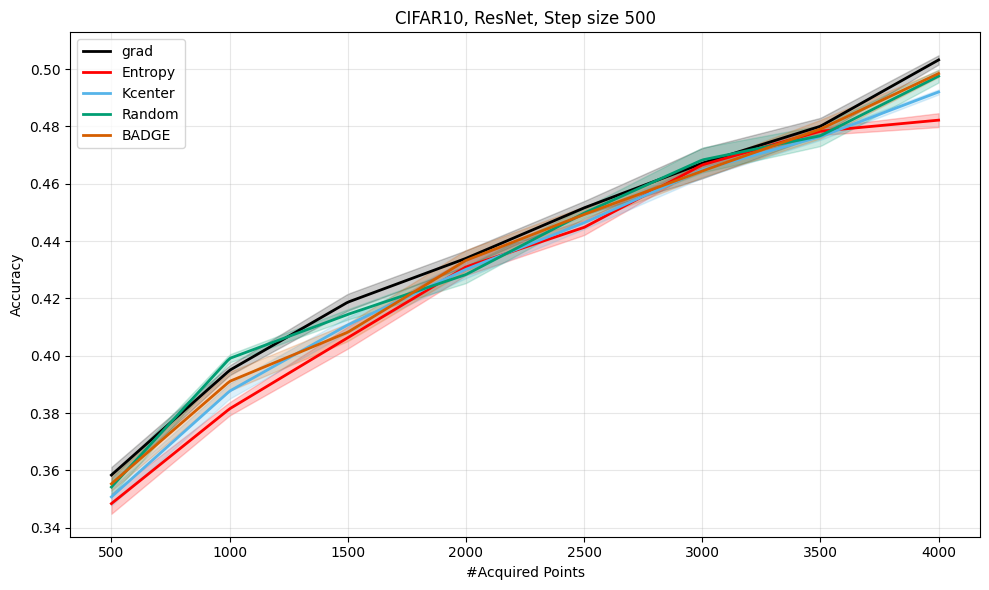}
        \caption*{(e)}
    \end{minipage}
    \hfill
    \begin{minipage}[b]{0.48\textwidth}
        \centering
        \includegraphics[width=\linewidth, trim=20 10 20 10]{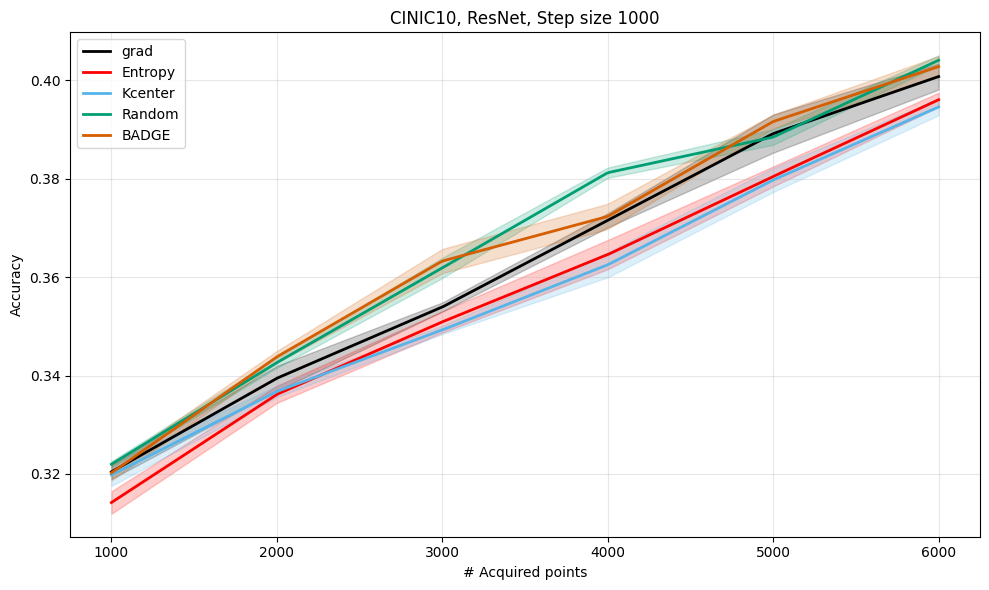}
        \caption*{(f)}
    \end{minipage}

    \caption{Active learning test accuracy on image benchmarks, mean over five seeds; shaded regions indicate variability.}
    \label{fig:al_cifar_svhn_500}
\end{figure}

\paragraph{Image benchmarks.}
Figure~\ref{fig:al_cifar_svhn_500} provides representative learning curves across datasets and backbones.

\noindent\textbf{(a) STL-10 (LeNet).}
All methods improve steadily and remain close. BADGE is slightly stronger early on, consistent with the benefit of selecting a diverse initial batch, while grad becomes more competitive at later rounds.

\noindent\textbf{(b) GTSRB (VGG-16).}
Gradient-based acquisition (grad) is consistently strong and reaches high accuracy quickly. Random and BADGE track closely at later budgets, whereas K-center is less stable, with a noticeable mid-budget dip before recovering.

\noindent\textbf{(c) SVHN (LeNet).}
BADGE and grad remain competitive, while Random and Entropy improve more gradually. K-center is consistently weaker, suggesting that diversity alone is not sufficient in this setting.

\noindent\textbf{(d) CIFAR-10 (VGG-16).}
\texttt{grad} is often among the strongest methods, with BADGE and K-center close behind. Entropy lags for much of the trajectory, although the gap narrows as the labeled set increases.

\noindent\textbf{(e) CIFAR-10 (ResNet).}
Methods are tightly clustered across most budgets. Grad is usually marginally ahead, while Entropy is slightly behind; differences become small at higher budgets.

\noindent\textbf{(f) CINIC-10 (ResNet).}
The same pattern persists: gradient- and diversity-aware strategies provide modest but consistent early gains, while performance converges as the labeled set grows.

Additional learning curves for other backbones and acquisition schedules are included in
Appendix~\ref{app:al_curves}. Since individual learning curves can emphasize different parts of the budget
range, we next provide an aggregate comparison across all experiments and acquisition rounds.

\subsection{Overall Comparison}
\label{sec:overall}
\paragraph{Pairwise penalty matrix.}
We compare all query strategies across a suite of experimental settings (datasets, architectures, and acquisition schedules). For each experiment and each active learning round, we run every method with five random seeds and record test accuracy. Within a fixed round of a fixed experiment, we then compare each method pair $(i,j)$ using a paired $t$-test on the per-seed accuracies. To control for the multiple pairwise comparisons performed within each round, we apply a Benjamini--Hochberg false discovery rate (FDR) correction to the resulting $p$-values and declare a win only if the adjusted $p$-value falls below $\alpha$ (we use $\alpha=0.05$). If $i$ significantly outperforms $j$, we add a contribution of $1/n_e$ to $P_{i,j}$, where $n_e$ is the number of evaluated rounds in that experiment; if $j$ significantly outperforms $i$, we add $1/n_e$ to $P_{j,i}$. This normalization ensures that each experiment contributes at most one unit of win mass to each ordered pair, independent of how many rounds it contains. Aggregating over all experiments yields a pairwise penalty matrix $P \in \mathbb{R}^{K\times K}$, where $K$ is the number of methods.

This construction follows standard guidance for multi-dataset algorithm comparison via systematic statistical testing and pairwise post-hoc comparisons~\citep{Demsar2006,Garcia2008,Dolan2002}, and matches the PPM-style evaluation commonly used in deep active learning~\citep{Ash2020}.

\paragraph{Interpretation.}
The entry $P_{i,j}$ measures how often method $i$ significantly outperforms method $j$ across the benchmark suite (larger values indicate that $i$ more frequently beats $j$). Rows therefore summarize wins and columns summarize losses. To obtain a single scalar score per method, we compute a loss score as the column-wise average of $P$; methods with lower loss scores are less frequently beaten and are thus stronger overall. The bar plots visualize this ranking, complementing the more detailed pairwise structure in the heatmaps.

\paragraph{Results.}
Figure~\ref{fig:ppm_overall} reports three aggregated views: over all rounds, over the early-stage rounds, and over the late-stage rounds. Across all aggregations, grad and BADGE  achieve the strongest overall performance. K-center is generally intermediate, while Entropy and Random are most frequently beaten. The early-stage aggregation highlights BADGE as particularly competitive when labels are scarce, whereas the late-stage aggregation emphasizes the advantage of gradient-based selection as training progresses and methods approach saturation.

\begin{figure}[htbp]
    \centering
    \begin{minipage}[b]{0.32\textwidth}
        \centering
        \includegraphics[width=\textwidth]{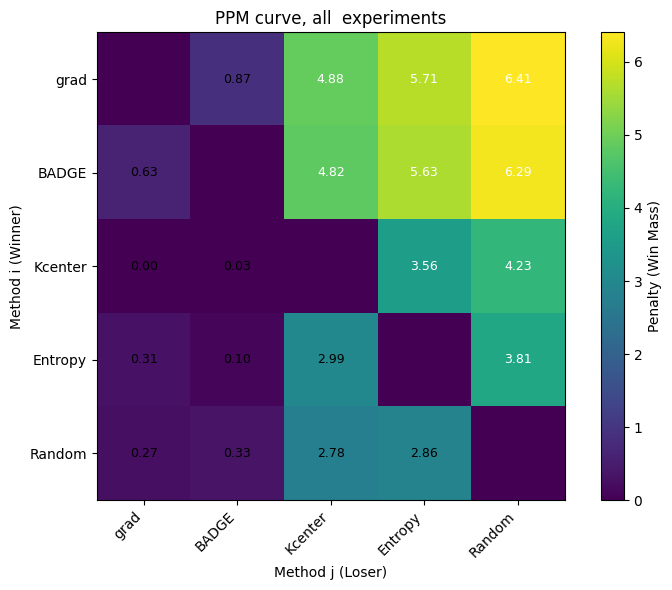}
        \caption*{(a)}
    \end{minipage}\hfill
    \begin{minipage}[b]{0.32\textwidth}
        \centering
        \includegraphics[width=\textwidth]{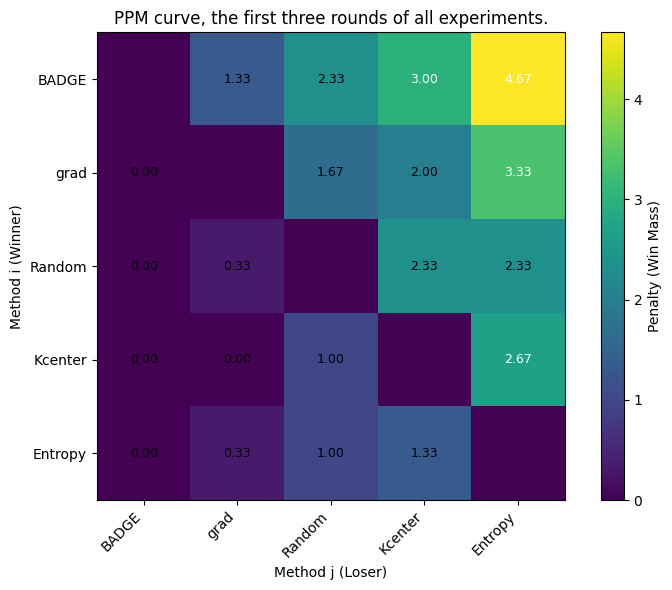}
        \caption*{(b)}
    \end{minipage}\hfill
    \begin{minipage}[b]{0.32\textwidth}
        \centering
        \includegraphics[width=\textwidth]{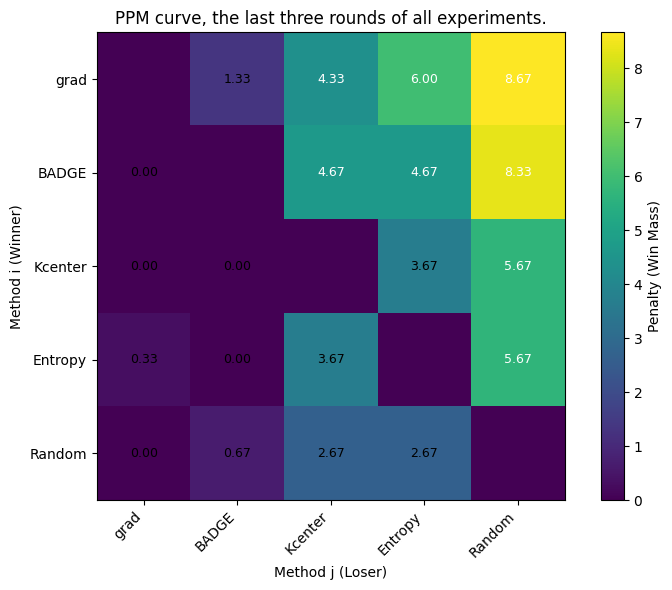}
        \caption*{(c)}
    \end{minipage}

    \vspace{0.4cm}

    \begin{minipage}[b]{0.32\textwidth}
        \centering
        \includegraphics[width=\textwidth]{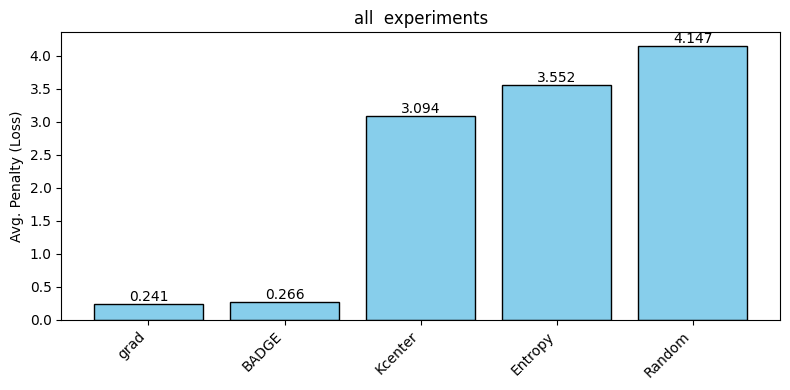}
        \caption*{(d)}
    \end{minipage}\hfill
    \begin{minipage}[b]{0.32\textwidth}
        \centering
        \includegraphics[width=\textwidth]{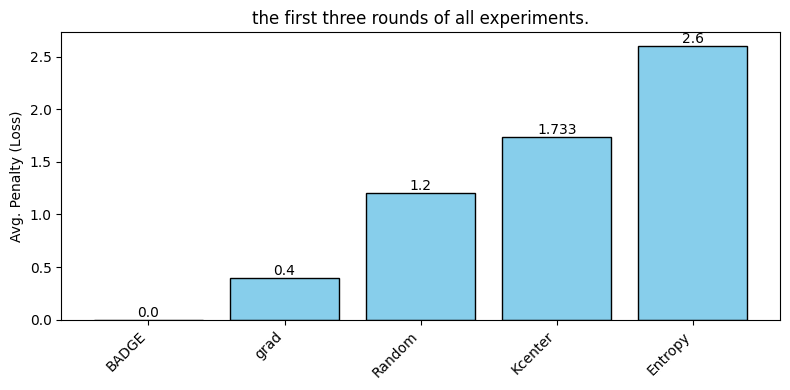}
        \caption*{(e)}
    \end{minipage}\hfill
    \begin{minipage}[b]{0.32\textwidth}
        \centering
        \includegraphics[width=\textwidth]{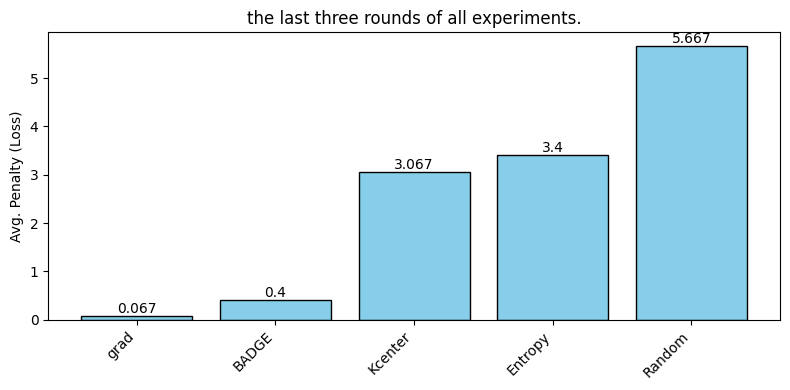}
        \caption*{(f)}
    \end{minipage}

    \caption{Overall comparison using the pairwise penalty matrix (top row) and the corresponding loss-score ranking (bottom row). (a,d) aggregate over all rounds; (b,e) early-stage rounds; (c,f) late-stage rounds. Larger PPM entries indicate more frequent statistically significant wins, while lower loss scores indicate stronger overall performance.}
    \label{fig:ppm_overall}
\end{figure}
\subsection{Computational Cost of the Acquisition Functions}
\label{subsec:acq-complexity}

We compare the cost of one acquisition round for the methods used in our experiments. This cost includes only
the acquisition step, i.e., selecting a batch of examples from the unlabeled pool. It does not include model
training or retraining.

In deep active learning, much of the acquisition cost comes from computing model outputs or embeddings for the
unlabeled pool. Entropy is relatively cheap because it only requires forward predictions. Gradient-based methods,
such as \texttt{grad} and \textsc{BADGE}, are more expensive because they require gradient embeddings. K-center
uses feature embeddings and then performs distance-based selection in feature space.

After the embeddings are computed, the methods differ in how they select the batch. Entropy simply ranks points
by predictive uncertainty. Our \texttt{grad} method compares each pool point with a labeled-set reference gradient
and selects the points with the largest discrepancy. In contrast, \textsc{BADGE} uses a k-means++ style selection
step in gradient space to encourage diversity within the selected batch. K-center repeatedly updates distances
from pool points to the labeled set and to already selected points, so its cost can increase as the labeled set
becomes larger.

\begin{table}[t]
\small
\centering
\begin{tabular}{|l|c|}
\hline
\textbf{Method} & \textbf{Time per round (s)} \\
\hline
Entropy & $1.06 \pm 0.05$ \\
\hline
\texttt{grad} (DF batch) & $2.44 \pm 0.31$ \\
\hline
\textsc{BADGE}~\citep{Ash2020} & $3.07 \pm 0.12$ \\
\hline
K-center (core-set)~\citep{sener2017active} & $3.47 \pm 0.23$ \\
\hline
\end{tabular}
\caption{Acquisition time per round, excluding model training. Times are mean $\pm$ standard deviation over
acquisition rounds 01--05 with an unlabeled pool of 25{,}000 examples and batch size 500. Embeddings are cached
within each acquisition round.}
\label{tab:acq-complexity-and-time}
\end{table}

The timing results are consistent with this computational picture. Entropy is the fastest method because it only
uses forward predictions. The \texttt{grad} method is slower than entropy because it uses gradient embeddings, but
it is faster than \textsc{BADGE} in this experiment because its batch selection step is simpler. \textsc{BADGE}
adds an explicit diversity step through k-means++ selection in gradient space, while \texttt{grad} performs direct
scoring followed by top-b selection. K-center is also relatively expensive because its distance computations grow
with the labeled set and the selected batch.
\section{Conclusion}
We introduced a gradient-discrepancy (DF) acquisition criterion for pool-based active learning, motivated by a discrepancy
term that appears in gradient-based generalization analysis. In implementation, DF uses pseudo-labels together with
gradient embeddings (e.g., from the last layer) to score unlabeled candidates by how different they are from a labeled-set
reference summary. This yields a bound-inspired selection rule that is practical to compute and does not require
explicit clustering.

Empirically, DF-based selection is competitive with common baselines in our benchmark suite;
in our experiments, \texttt{grad} typically remains competitive with \textsc{BADGE} overall. 

One limitation of the current \texttt{grad} rule is that it scores points independently, so the selected batch may contain redundant examples with similar gradient signals. Future work could incorporate an additional diversity check based on the angle or cosine similarity between last-layer gradient embeddings, rather than relying only on the discrepancy magnitude. This could help select batches that are both highly discrepant from the labeled-set reference and diverse within the batch.

\bibliography{main}

@article{luo2022generalization,
  title={Generalization bounds for gradient methods via discrete and continuous prior},
  author={Luo, Xuanyuan and Luo, Bei and Li, Jian},
  journal={Advances in Neural Information Processing Systems},
  volume={35},
  pages={10600--10614},
  year={2022}
}

@techreport{settles2009active,
  author      = {Settles, Burr},
  title       = {Active Learning Literature Survey},
  institution = {University of Wisconsin--Madison},
  type        = {Computer Sciences Technical Report},
  number      = {1648},
  year        = {2009}
}

@inproceedings{NIPS2007_a1519de5,
 author = {Settles, Burr and Craven, Mark and Ray, Soumya},
 booktitle = {Advances in Neural Information Processing Systems},
 editor = {J. Platt and D. Koller and Y. Singer and S. Roweis},
 pages = {},
 publisher = {Curran Associates, Inc.},
 title = {Multiple-Instance Active Learning},
 url = {https://proceedings.neurips.cc/paper_files/paper/2007/file/a1519de5b5d44b31a01de013b9b51a80-Paper.pdf},
 volume = {20},
 year = {2007}
}

@article{DBLP:journals/corr/abs-2103-00123,
  author       = {KrishnaTeja Killamsetty and
                  Durga Sivasubramanian and
                  Baharan Mirzasoleiman and
                  Ganesh Ramakrishnan and
                  Abir De and
                  Rishabh K. Iyer},
  title        = {{GRAD-MATCH:} {A} Gradient Matching Based Data Subset Selection for
                  Efficient Learning},
  journal      = {CoRR},
  volume       = {abs/2103.00123},
  year         = {2021},
  url          = {https://arxiv.org/abs/2103.00123},
  eprinttype    = {arXiv},
  eprint       = {2103.00123},
  bibsource    = {dblp computer science bibliography, https://dblp.org}
}

@article{DBLP:journals/corr/abs-2107-07075,
  author       = {Mansheej Paul and
                  Surya Ganguli and
                  Gintare Karolina Dziugaite},
  title        = {Deep Learning on a Data Diet: Finding Important Examples Early in
                  Training},
  journal      = {CoRR},
  volume       = {abs/2107.07075},
  year         = {2021},
  url          = {https://arxiv.org/abs/2107.07075},
  eprinttype    = {arXiv},
  eprint       = {2107.07075},
  bibsource    = {dblp computer science bibliography, https://dblp.org}
}

@techreport{krizhevsky2009learning,
  title={Learning multiple layers of features from tiny images},
  author={Krizhevsky, Alex},
  year={2009},
  institution={University of Toronto}
}

@article{lecun1998gradient,
  title={Gradient-based learning applied to document recognition},
  author={LeCun, Yann and Bottou, L{\'e}on and Bengio, Yoshua and Haffner, Patrick},
  journal={Proceedings of the IEEE},
  volume={86},
  number={11},
  pages={2278--2324},
  year={1998},
  publisher={IEEE}
}

@misc{gisette,
  author = {UCI Machine Learning Repository},
  title = {Gisette Dataset},
  year = {2003},
  url = {https://archive.ics.uci.edu/ml/datasets/gisette},
  note = {Accessed: 2025-05-28}
}

@inproceedings{gal2017deep,
  title={Deep bayesian active learning with image data},
  author={Gal, Yarin and Islam, Riashat and Ghahramani, Zoubin},
  booktitle={International conference on machine learning},
  pages={1183--1192},
  year={2017},
  organization={PMLR}
}

@article{sener2017active,
  title={Active learning for convolutional neural networks: A core-set approach},
  author={Sener, Ozan and Savarese, Silvio},
  journal={arXiv preprint arXiv:1708.00489},
  year={2017}
}

@article{Demsar2006,
  author  = {Janez Dem{\v{s}}ar},
  title   = {Statistical Comparisons of Classifiers over Multiple Data Sets},
  journal = {Journal of Machine Learning Research},
  volume  = {7},
  pages   = {1--30},
  year    = {2006}
}

@article{Garcia2008,
  author  = {Salvador Garc\'{i}a and Francisco Herrera},
  title   = {An Extension on ``Statistical Comparisons of Classifiers over Multiple Data Sets'' for all Pairwise Comparisons},
  journal = {Journal of Machine Learning Research},
  volume  = {9},
  pages   = {2677--2694},
  year    = {2008}
}

@article{Dolan2002,
  author  = {Elizabeth D. Dolan and Jorge J. Mor\'{e}},
  title   = {Benchmarking Optimization Software with Performance Profiles},
  journal = {Mathematical Programming},
  volume  = {91},
  number  = {2},
  pages   = {201--213},
  year    = {2002},
  doi     = {10.1007/s101070100263}
}

@inproceedings{Ash2020,
  author    = {Jordan T. Ash and Chicheng Zhang and Akshay Krishnamurthy and John Langford and Alekh Agarwal},
  title     = {Deep Batch Active Learning by Diverse, Uncertain Gradient Lower Bounds},
  booktitle = {Proceedings of the International Conference on Learning Representations (ICLR)},
  year      = {2020},
  url       = {https://openreview.net/forum?id=ryghZJBKPS}
}

@inproceedings{lang1995newsweeder,
  author    = {Lang, Ken},
  title     = {NewsWeeder: Learning to Filter Netnews},
  booktitle = {Proceedings of the Twelfth International Conference on Machine Learning (ICML)},
  pages     = {331--339},
  year      = {1995}
}

@misc{qwone20newsgroups,
  author       = {Rennie, Jason},
  title        = {20 Newsgroups Data Set},
  year         = {2008},
  howpublished = {\url{https://qwone.com/~jason/20Newsgroups/}},
  note         = {Accessed 2025-12-15}
}

@misc{uci_pokerhand_2002,
  author       = {Cattral, Ryan and Oppacher, Franz},
  title        = {Poker Hand [Dataset]},
  howpublished = {UCI Machine Learning Repository},
  year         = {2002},
  doi          = {10.24432/C5KW38},
  url          = {https://archive.ics.uci.edu/ml/datasets/poker+hand},
  note         = {Accessed 2025-12-15}
}

@misc{uci_isolet_1991,
  author       = {Cole, Ron and Fanty, Mark},
  title        = {ISOLET [Dataset]},
  howpublished = {UCI Machine Learning Repository},
  year         = {1991},
  doi          = {10.24432/C51G69},
  url          = {https://archive.ics.uci.edu/ml/datasets/isolet},
  note         = {Accessed 2025-12-15}
}

@misc{stl10_dataset,
  author       = {Coates, Adam and Lee, Honglak and Ng, Andrew Y.},
  title        = {STL-10 Dataset},
  year         = {2011},
  howpublished = {Stanford University},
  url          = {http://cs.stanford.edu/~acoates/stl10},
  note         = {Accessed 2025-12-15}
}

@misc{darlow2018cinic10_dataset,
  author    = {Darlow, Luke N. and Crowley, Elliot J. and Antoniou, Antreas and Storkey, Amos},
  title     = {CINIC-10 Is Not ImageNet or CIFAR-10 [Dataset]},
  year      = {2018},
  publisher = {University of Edinburgh},
  doi       = {10.7488/ds/2448},
  url       = {https://datashare.ed.ac.uk/handle/10283/3192},
  note      = {Accessed 2025-12-15}
}

@article{stallkamp2012gtsrb,
  author  = {Stallkamp, Johannes and Schlipsing, Marc and Salmen, Jan and Igel, Christian},
  title   = {Man vs. Computer: Benchmarking Machine Learning Algorithms for Traffic Sign Recognition},
  journal = {Neural Networks},
  volume  = {32},
  pages   = {323--332},
  year    = {2012},
  doi     = {10.1016/j.neunet.2012.02.016}
}

@inproceedings{netzer2011svhn,
  author    = {Netzer, Yuval and Wang, Tao and Coates, Adam and Bissacco, Alessandro and Wu, Bo and Ng, Andrew Y.},
  title     = {Reading Digits in Natural Images with Unsupervised Feature Learning},
  booktitle = {NIPS Workshop on Deep Learning and Unsupervised Feature Learning},
  year      = {2011},
  url       = {http://ufldl.stanford.edu/housenumbers/}
}

@article{vanschoren2013openml,
  author  = {Vanschoren, Joaquin and van Rijn, Jan N. and Bischl, Bernd and Torgo, Luis},
  title   = {OpenML: Networked Science in Machine Learning},
  journal = {SIGKDD Explorations},
  volume  = {15},
  number  = {2},
  pages   = {49--60},
  year    = {2013},
  doi     = {10.1145/2641190.2641198}
}

@article{rumelhart1986backprop,
  author  = {Rumelhart, David E. and Hinton, Geoffrey E. and Williams, Ronald J.},
  title   = {Learning representations by back-propagating errors},
  journal = {Nature},
  volume  = {323},
  pages   = {533--536},
  year    = {1986},
  doi     = {10.1038/323533a0},
  url     = {https://www.nature.com/articles/323533a0}
}

@inproceedings{he2016resnet,
  author    = {He, Kaiming and Zhang, Xiangyu and Ren, Shaoqing and Sun, Jian},
  title     = {Deep Residual Learning for Image Recognition},
  booktitle = {2016 IEEE Conference on Computer Vision and Pattern Recognition (CVPR)},
  pages     = {770--778},
  year      = {2016},
  doi       = {10.1109/CVPR.2016.90},
  url       = {https://www.cv-foundation.org/openaccess/content_cvpr_2016/papers/He_Deep_Residual_Learning_CVPR_2016_paper.pdf}
}

@inproceedings{simonyan2015vgg,
  author    = {Simonyan, Karen and Zisserman, Andrew},
  title     = {Very Deep Convolutional Networks for Large-Scale Image Recognition},
  booktitle = {International Conference on Learning Representations (ICLR)},
  year      = {2015},
  url       = {https://arxiv.org/abs/1409.1556}
}

@inproceedings{lowell2019practical,
  title={Practical Obstacles to Deploying Active Learning},
  author={Lowell, David and Lipton, Zachary C. and Wallace, Byron C.},
  booktitle={Proceedings of the 2019 Conference on Empirical Methods in Natural Language Processing and the 9th International Joint Conference on Natural Language Processing (EMNLP-IJCNLP)},
  pages={21--30},
  year={2019}
}

@inproceedings{ovadia2019can,
  title={Can You Trust Your Model's Uncertainty? Evaluating Predictive Uncertainty Under Dataset Shift},
  author={Ovadia, Yaniv and Fertig, Emily and Ren, Jie and Nado, Zachary and Sculley, D. and Nowozin, Sebastian and Dillon, Joshua V. and Lakshminarayanan, Balaji and Snoek, Jasper},
  booktitle={Advances in Neural Information Processing Systems},
  year={2019}
}
\bibliographystyle{tmlr}

\appendix

\section{Contraction of Gradient Discrepancy}
\label{sec:appendix_contraction}

The following proposition gives sufficient local conditions under which Assumption~1 can hold.
We then provide qualitative empirical evidence that a decreasing discrepancy trend can appear during training.

\begin{proposition}[Sufficient conditions for eventual contraction of gradient discrepancy]
\label{prop:sufficient-grad-discrepancy-contraction}
Let $F(\theta)$ and $G(\theta)$ be differentiable objective functions with a common stationary point $\theta^\ast$
such that
\[
\nabla F(\theta^\ast)=\nabla G(\theta^\ast)=0.
\]
Define the discrepancy function $\Delta(\theta):=F(\theta)-G(\theta)$ and note that $\nabla\Delta(\theta^\ast)=0$.

Assume there exists a neighborhood $U$ of $\theta^\ast$ and an index $t_0$ such that the iterates
$(\theta_t)_{t\ge 0}$ produced by the optimization algorithm satisfy $\theta_t\in U$ for all $t\ge t_0$ and:

\begin{enumerate}
\item[\textbf{S1.}] \textbf{(Eventual linear convergence of iterates)} There exists $q\in(0,1)$ such that for all
$t\ge t_0$,
\[
\|\theta_{t+1}-\theta^\ast\| \le q\,\|\theta_t-\theta^\ast\|.
\]
\item[\textbf{S2.}] \textbf{(Local sandwich bound for $\nabla\Delta$)} There exist constants $L_\Delta>0$ and
$\mu_\Delta>0$ such that for all $\theta\in U$,
\[
\mu_\Delta \|\theta-\theta^\ast\| \le \|\nabla\Delta(\theta)\| \le L_\Delta \|\theta-\theta^\ast\|.
\]
\end{enumerate}

Let
\[
\rho := \frac{L_\Delta q}{\mu_\Delta}.
\]
If $\rho<1$, then for all $t\ge t_0$ the gradient discrepancy contracts linearly:
\[
\|\nabla\Delta(\theta_{t+1})\| \le \rho\,\|\nabla\Delta(\theta_t)\|.
\]
In particular, if $\nabla\Delta(\theta_t)\neq 0$, then $\|\nabla\Delta(\theta_{t+1})\|<\|\nabla\Delta(\theta_t)\|$.
\end{proposition}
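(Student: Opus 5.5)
The argument is a direct chaining of the two hypotheses: squeeze $\|\nabla\Delta(\theta_{t+1})\|$ from above by a multiple of $\|\theta_{t+1}-\theta^\ast\|$, pass to $\|\theta_t-\theta^\ast\|$ with S1, and return to $\|\nabla\Delta(\theta_t)\|$ through the lower half of the sandwich in S2. No earlier result of the paper is needed beyond the definitions.

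Fix $t\ge t_0$. Both $\theta_t$ and $\theta_{t+1}$ lie in $U$, since $t+1\ge t_0$ as well, so S2 applies at both points. First, the upper bound in S2 at $\theta_{t+1}$ gives
\[
\|\nabla\Delta(\theta_{t+1})\|\le L_\Delta\|\theta_{t+1}-\theta^\ast\|.
\]
Second, S1 gives
\[
\|\theta_{t+1}-\theta^\ast\|\le q\|\theta_t-\theta^\ast\|.
\]
Third, the lower bound in S2 at $\theta_t$, divided by $\mu_\Delta>0$, gives
\[
\|\theta_t-\theta^\ast\|\le \mu_\Delta^{-1}\|\nabla\Delta(\theta_t)\|.
\]
Chaining these three inequalities yields
\[
\|\nabla\Delta(\theta_{t+1})\|\le \frac{L_\Delta q}{\mu_\Delta}\|\nabla\Delta(\theta_t)\| = \rho\,\|\nabla\Delta(\theta_t)\|,
\]
which is the claimed contraction.

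For the strict-decrease statement, suppose $\nabla\Delta(\theta_t)\neq0$. Then $\|\nabla\Delta(\theta_t)\|>0$, and $\rho<1$ gives $\rho\|\nabla\Delta(\theta_t)\|<\|\nabla\Delta(\theta_t)\|$, so $\|\nabla\Delta(\theta_{t+1})\|<\|\nabla\Delta(\theta_t)\|$.

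There is no real obstacle; the only points needing care are the following.
\begin{itemize}
\item Membership in $U$ must be checked at both $\theta_t$ and $\theta_{t+1}$, which holds because both indices are at least $t_0$.
\item The division by $\mu_\Delta$ requires $\mu_\Delta>0$, which S2 provides.
\item The hypothesis $\rho<1$ is only needed for the strictness conclusion; the inequality with factor $\rho$ holds for any value of $\rho$.
\end{itemize}

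It is also worth remarking that S2 forces $\mu_\Delta\le L_\Delta$ whenever $U$ contains some $\theta\neq\theta^\ast$. Hence $\rho<1$ effectively requires $q<\mu_\Delta/L_\Delta$, so the condition is consistent but restrictive.

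Finally, to connect with Assumption~1, take $F=f(\cdot;S)$ and $G=f(\cdot;S_J)$. Then $\nabla\Delta(\theta_t)=\mathcal{DF}(S,S_J,t)$ by \eqref{eq:df_in_bound}, and the contraction gives \eqref{eq:df_monotone} for all $t\ge t_0$.
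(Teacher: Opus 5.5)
Your proof is correct and takes the same route as the paper's: the upper half of S2 at $\theta_{t+1}$, then S1, then the lower half of S2 at $\theta_t$, with strict decrease following from $\rho<1$ and $\|\nabla\Delta(\theta_t)\|>0$. Your side remarks are also correct: the factor-$\rho$ inequality holds without assuming $\rho<1$, and S2 forces $\mu_\Delta\le L_\Delta$, so $\rho<1$ requires $q<\mu_\Delta/L_\Delta$.
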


\begin{proof}
Fix any $t\ge t_0$. Since $\theta_{t+1}\in U$ and $\nabla\Delta(\theta^\ast)=0$, applying the upper bound in
\textbf{S2} (with $\theta=\theta_{t+1}$) yields
\begin{equation}
\label{eq:suff_step1}
\|\nabla\Delta(\theta_{t+1})\|
= \|\nabla\Delta(\theta_{t+1})-\nabla\Delta(\theta^\ast)\|
\le L_\Delta \|\theta_{t+1}-\theta^\ast\|.
\end{equation}
By the eventual linear convergence assumption \textbf{S1},
\begin{equation}
\label{eq:suff_step2}
L_\Delta \|\theta_{t+1}-\theta^\ast\|
\le L_\Delta q \|\theta_t-\theta^\ast\|.
\end{equation}
Finally, since $\theta_t\in U$, applying the lower bound in \textbf{S2} (with $\theta=\theta_t$) and rearranging gives
\begin{equation}
\label{eq:suff_step3}
\mu_\Delta \|\theta_t-\theta^\ast\| \le \|\nabla\Delta(\theta_t)\|
\quad\Longrightarrow\quad
\|\theta_t-\theta^\ast\| \le \frac{1}{\mu_\Delta}\|\nabla\Delta(\theta_t)\|.
\end{equation}
Combining \eqref{eq:suff_step1}, \eqref{eq:suff_step2}, and \eqref{eq:suff_step3} yields
\[
\|\nabla\Delta(\theta_{t+1})\|
\le L_\Delta \|\theta_{t+1}-\theta^\ast\|
\le L_\Delta q \|\theta_t-\theta^\ast\|
\le \frac{L_\Delta q}{\mu_\Delta}\|\nabla\Delta(\theta_t)\|
= \rho\,\|\nabla\Delta(\theta_t)\|,
\]
which proves the stated contraction. If $\rho<1$ and $\nabla\Delta(\theta_t)\neq 0$, then the inequality is strict.
\end{proof}

Proposition~\ref{prop:sufficient-grad-discrepancy-contraction} is stated as a set of sufficient local conditions
under which the gradient discrepancy contracts eventually (i.e., for all $t\ge t_0$ once the iterates remain in
a neighborhood $U$ of a stationary point $\theta^\ast$): \textbf{S1} only requires eventual linear convergence of the optimization iterates in parameter space, and
\textbf{S2} requires a local ``sandwich'' bound on $\|\nabla\Delta(\theta)\|$ around $\theta^\ast$—an upper bound that
is consistent with local smoothness and a lower bound that corresponds to a local error-bound / gradient-growth
condition. While such conditions are classical in optimization theory and may not hold globally for modern
overparameterized neural networks, they can be locally valid once training enters a stable basin around a well-behaved
solution. In particular, local error-bound behavior (which rules out extreme flatness in the discrepancy landscape)
is often observed empirically near converged solutions, and it is strictly weaker than requiring global strong
convexity.

\paragraph{Empirical evidence.}
To qualitatively assess whether an eventual contraction pattern is observed in practice, we trained a two-layer
network (learning rate $10^{-4}$) on three datasets. For each dataset, we formed $S$ by uniformly sampling 1000 points
and formed $S_J$ as a uniformly random subset of $S$. We then monitored the discrepancy sequence
$\|\mathcal{DF}(S,S_J,t)\|=\|\nabla\Delta(\theta_t)\|$ over epochs $t=1,\ldots,150$.
\begin{figure}[H]
\centering
\begin{minipage}{0.32\textwidth}
    \centering
    \includegraphics[width=\textwidth]{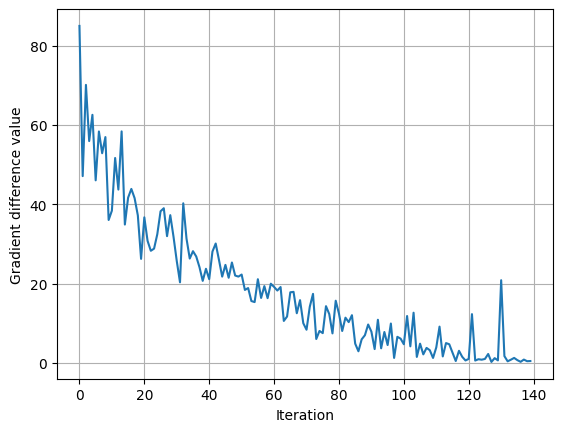}
    \captionsetup{font=small}
    \subcaption{German}
    \label{fig:german-diff}
\end{minipage}
\hfill
\begin{minipage}{0.32\textwidth}
    \centering
    \includegraphics[width=\textwidth]{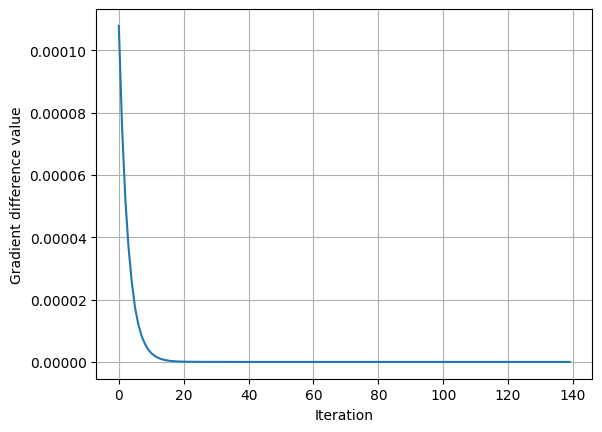}
    \captionsetup{font=small}
    \subcaption{Gisette}
    \label{fig:gisset-diff}
\end{minipage}
\hfill
\begin{minipage}{0.32\textwidth}
    \centering
    \includegraphics[width=\textwidth]{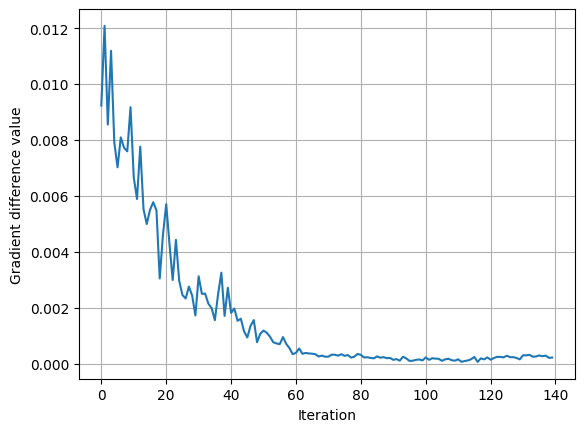}
    \captionsetup{font=small}
    \subcaption{20~Newsgroups}
    \label{fig:newsgroup-diff}
\end{minipage}

\caption{$\mathcal{DF}$ values over training epochs for German, Gisette, and 20~Newsgroups.}
\label{fig:bigdatasetd}
\end{figure}
Across Gisette and 20~Newsgroups the discrepancy decreases over training and typically
stabilizes (approaching a near-constant plateau) in later epochs. This behavior is consistent with the conclusion of
Proposition~\ref{prop:sufficient-grad-discrepancy-contraction}: once the iterates enter a stable neighborhood $U$ where
\textbf{S1}--\textbf{S2} are approximately satisfied, the discrepancy should contract with an effective rate
$\rho=\frac{L_\Delta q}{\mu_\Delta}<1$. For German, the overall trend is decreasing but with occasional epochs that
violate strict monotonicity, suggesting that the ``eventual'' regime may be intermittently left or that the effective
local constants vary during training.
\section{Additional Learning Curves}
\label{app:al_curves}
The remaining learning curves are presented in Figure~\ref{fig:lcurve-appendix}.

\begin{figure}[h]
    \centering

    \begin{subfigure}[b]{0.48\textwidth}
        \centering
        \includegraphics[width=\textwidth]{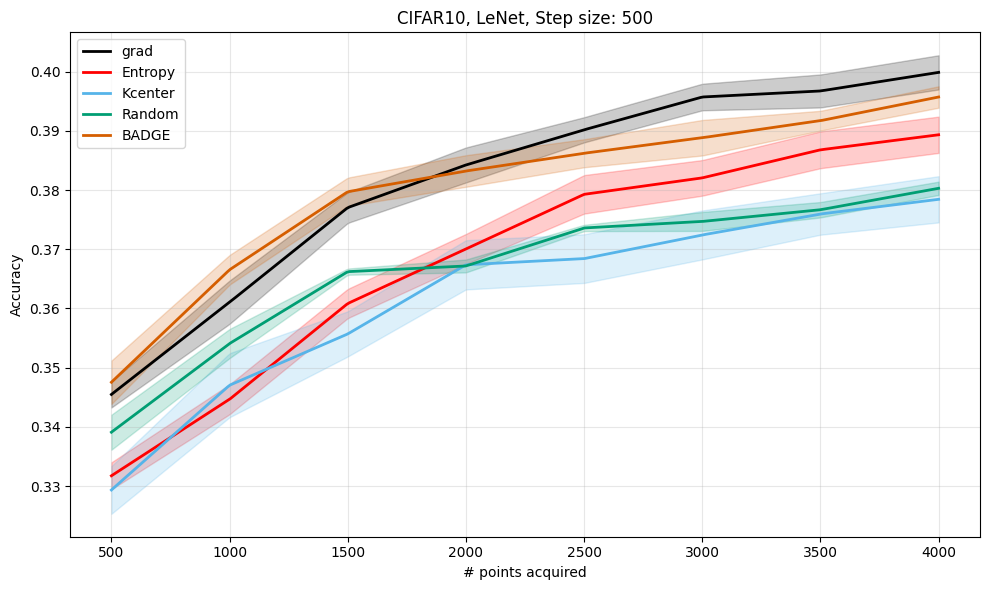}
        \caption{}
    \end{subfigure}\hfill
    \begin{subfigure}[b]{0.48\textwidth}
        \centering
        \includegraphics[width=\textwidth]{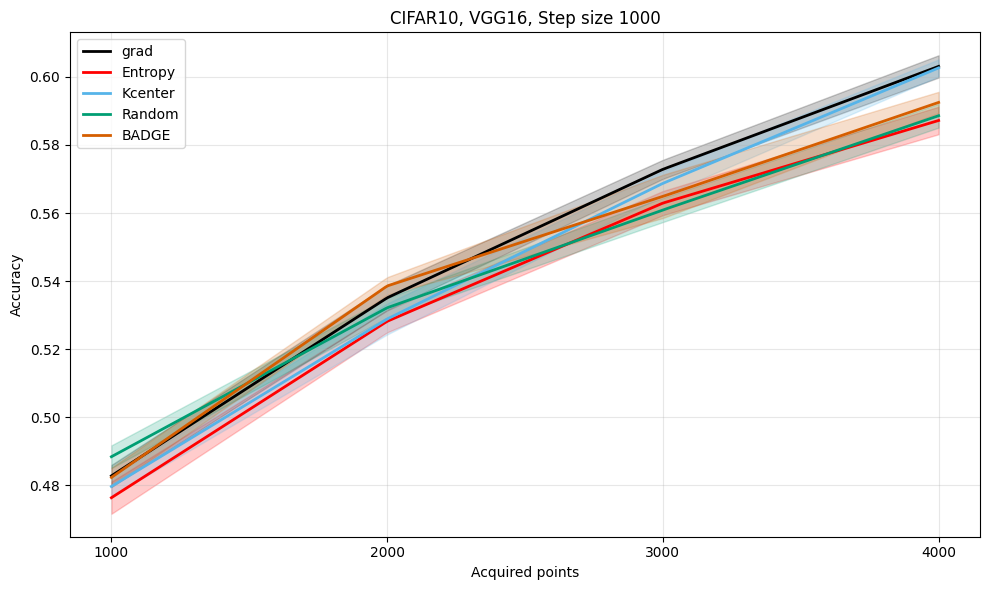}
        \caption{}
    \end{subfigure}

    \vspace{0.5cm}

    \begin{subfigure}[b]{0.48\textwidth}
        \centering
        \includegraphics[width=\textwidth]{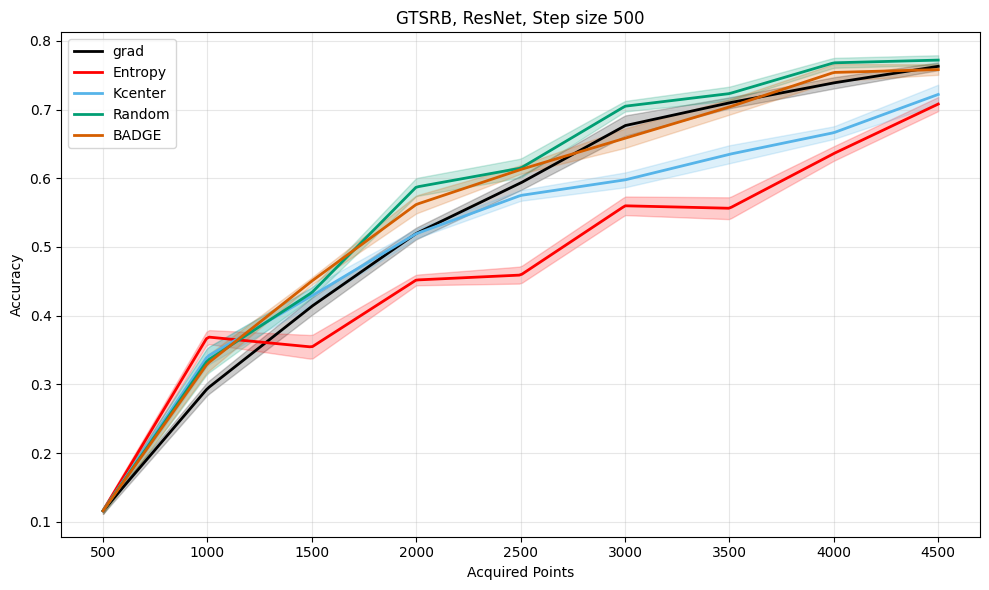}
        \caption{}
    \end{subfigure}\hfill
    \begin{subfigure}[b]{0.48\textwidth}
        \centering
        \includegraphics[width=\textwidth]{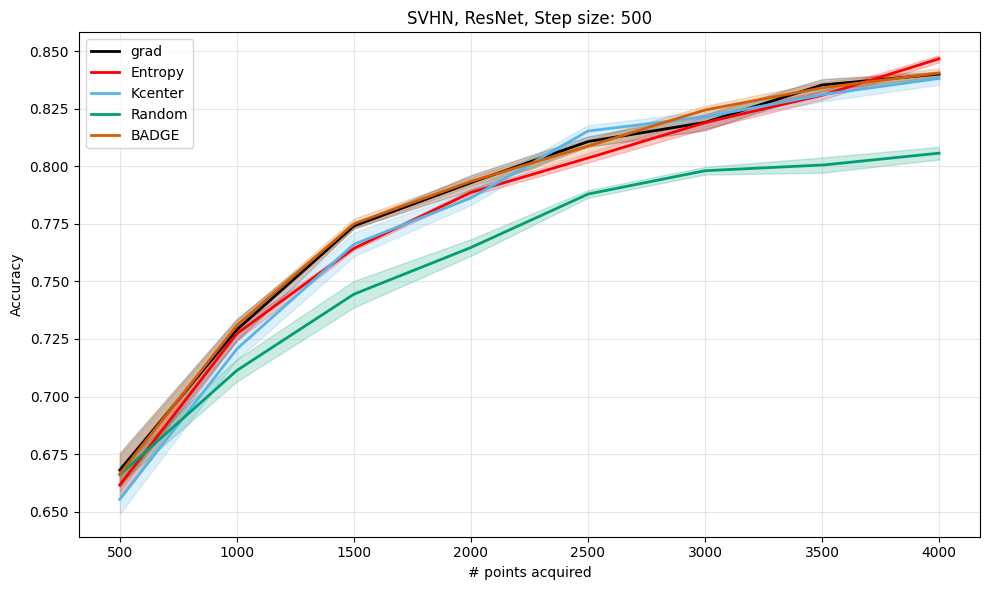}
        \caption{}
    \end{subfigure}

    \caption{Active learning test accuracy on image benchmarks, mean over five seeds; shaded regions indicate variability.}
    \label{fig:lcurve-appendix}
\end{figure}

\begin{figure}[h]
    \ContinuedFloat
    \centering

    \begin{subfigure}[b]{0.48\textwidth}
        \centering
        \includegraphics[width=\textwidth]{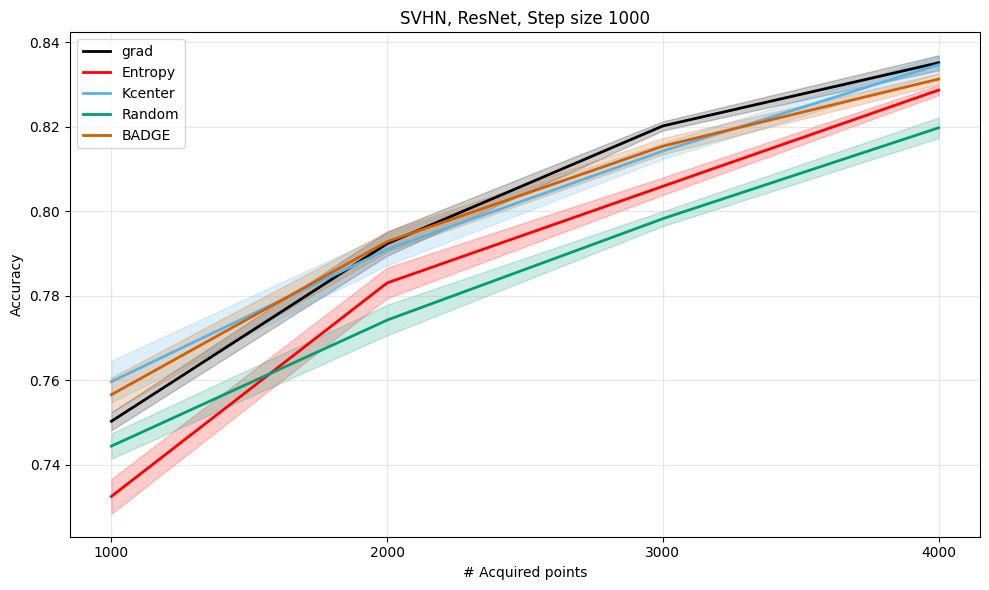}
        \caption{}
    \end{subfigure}\hfill
    \begin{subfigure}[b]{0.48\textwidth}
        \centering
        \includegraphics[width=\textwidth]{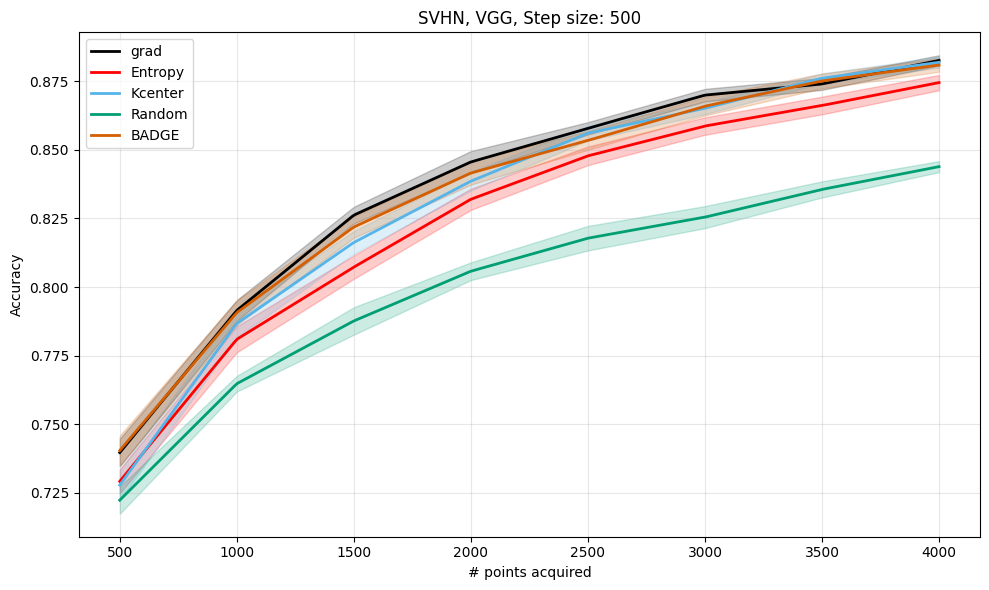}
        \caption{}
    \end{subfigure}

    \vspace{0.5cm}

    \begin{subfigure}[b]{0.48\textwidth}
        \centering
        \includegraphics[width=\textwidth]{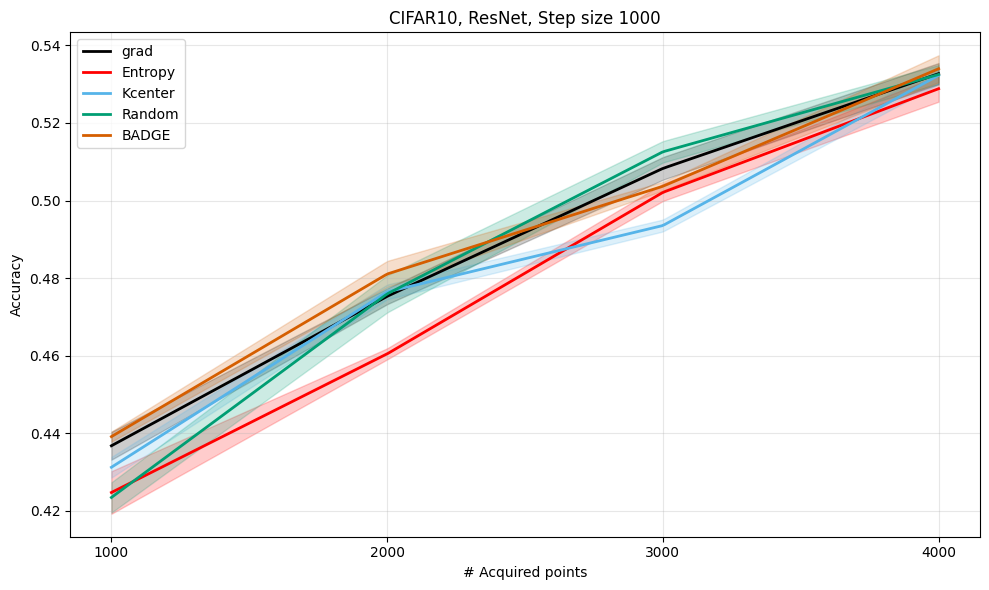}
        \caption{}
    \end{subfigure}\hfill
    \begin{subfigure}[b]{0.48\textwidth}
        \centering
        \includegraphics[width=\textwidth]{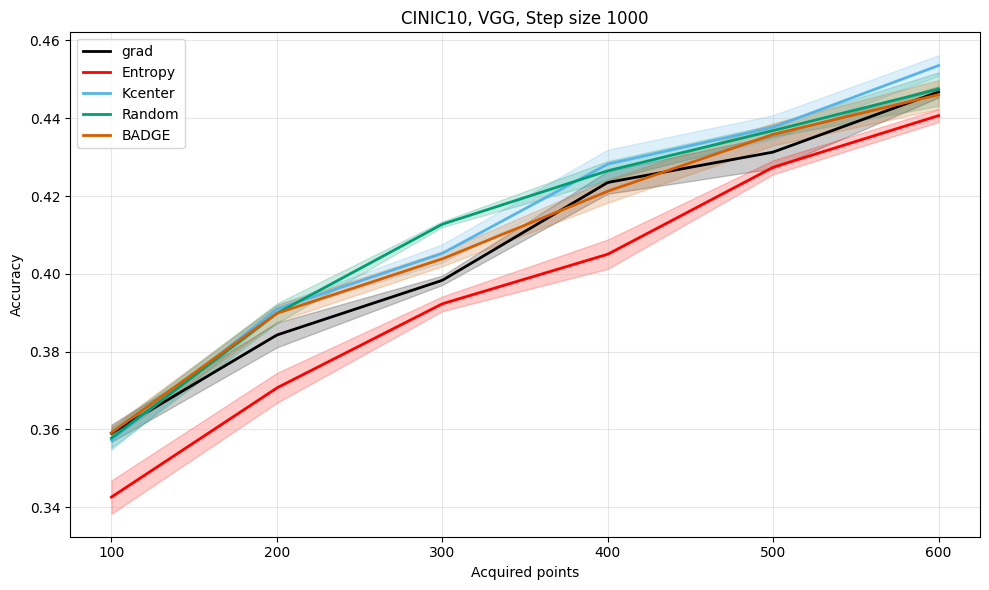}
        \caption{}
    \end{subfigure}

    \caption[]{Active learning test accuracy on image benchmarks, mean over five seeds; shaded regions indicate variability, continued.}
\end{figure}

\begin{figure}[p]
    \ContinuedFloat
    \centering

    \begin{subfigure}[b]{0.6\textwidth}
        \centering
        \includegraphics[width=\textwidth]{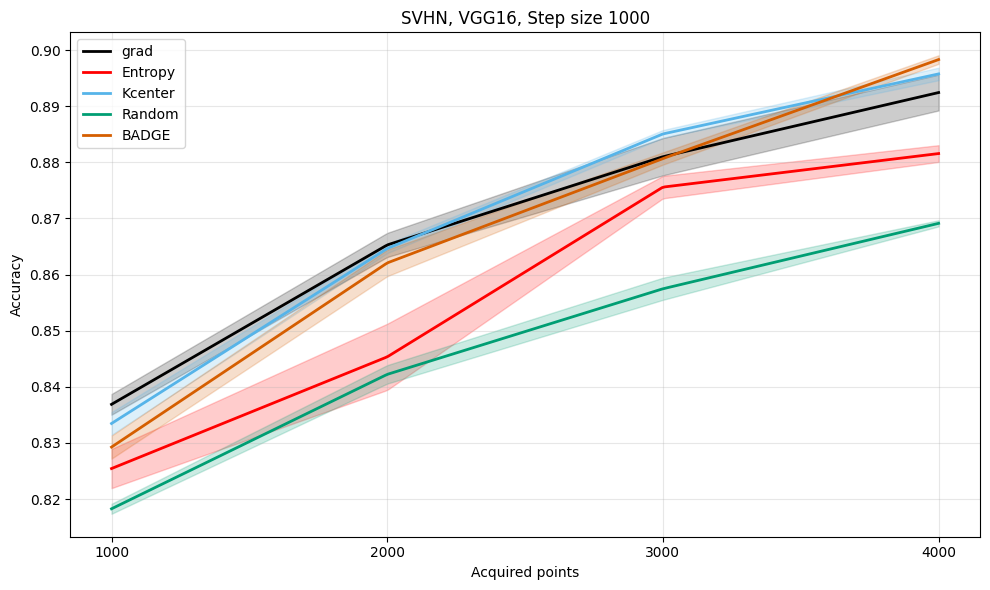}
        \caption{}
    \end{subfigure}

    \caption[]{Active learning test accuracy on image benchmarks, mean over five seeds; shaded regions indicate variability, continued.}
\end{figure}
\section{Pairwise Comparisons}
\label{App:comp}

In Section~\ref{sec:overall}, we compared methods over all experiments and rounds, as well as separately over
the first three and last three rounds of each experiment. Here, we further separate the pairwise penalty matrix
and loss-score plots by model family in Figure~\ref{fig:model} and by dataset in Figure~\ref{fig:ppm_dataset}. 
\begin{figure}[htbp]
    \centering
    \begin{minipage}[b]{0.32\textwidth}
        \centering
        \includegraphics[width=\textwidth]{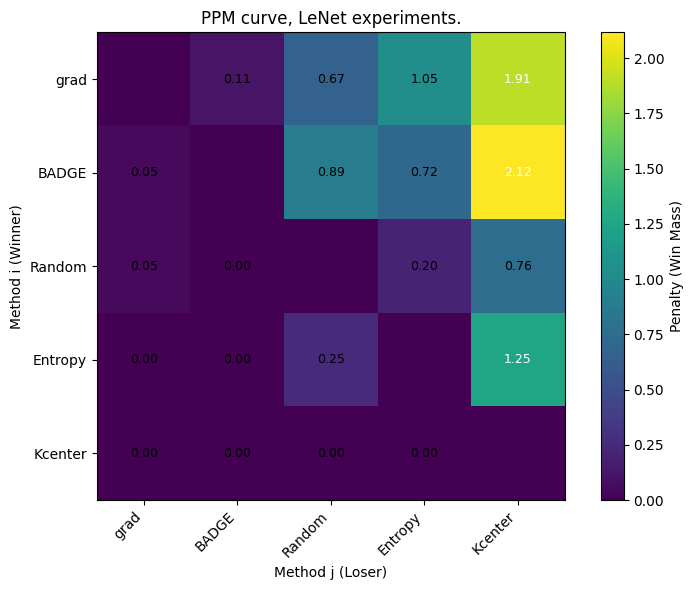}
        \caption*{(a)}
    \end{minipage}\hfill
    \begin{minipage}[b]{0.32\textwidth}
        \centering
        \includegraphics[width=\textwidth]{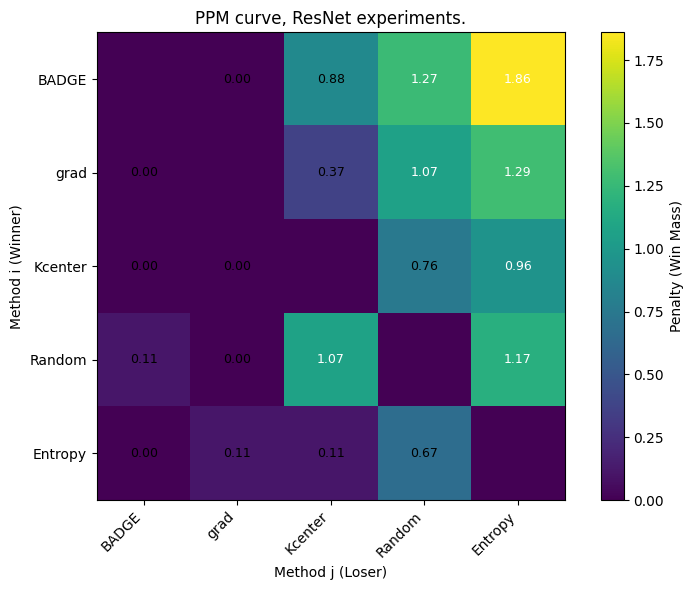}
        \caption*{(b)}
    \end{minipage}\hfill
    \begin{minipage}[b]{0.32\textwidth}
        \centering
        \includegraphics[width=\textwidth]{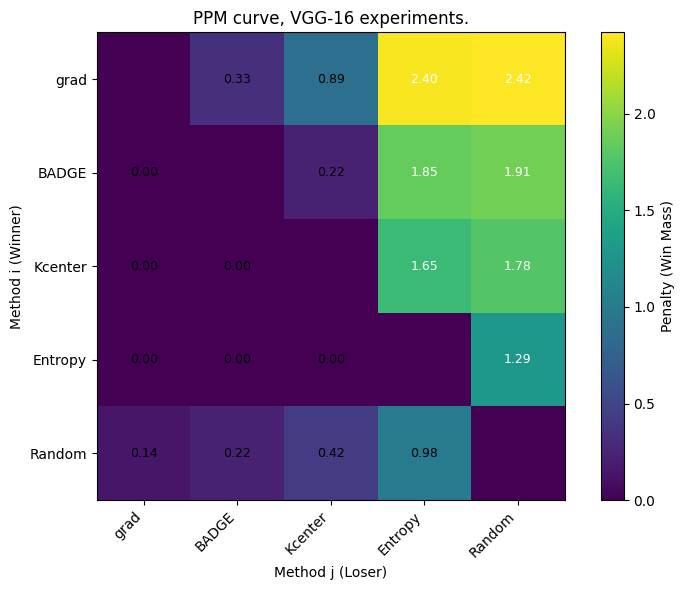}
        \caption*{(c)}
    \end{minipage}

    \vspace{0.4cm}

    \begin{minipage}[b]{0.32\textwidth}
        \centering
        \includegraphics[width=\textwidth]{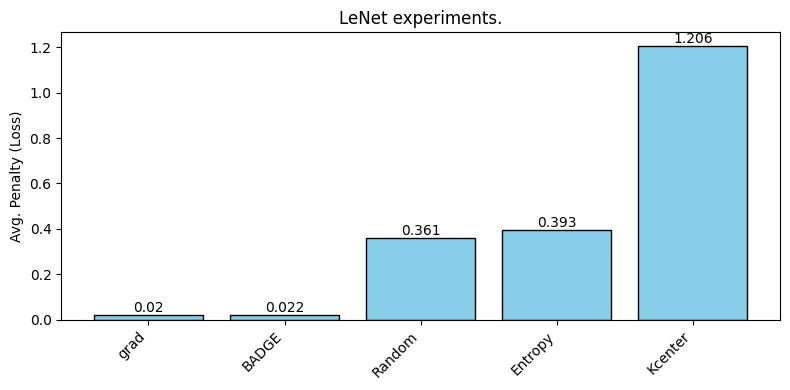}
        \caption*{(d)}
    \end{minipage}\hfill
    \begin{minipage}[b]{0.32\textwidth}
        \centering
        \includegraphics[width=\textwidth]{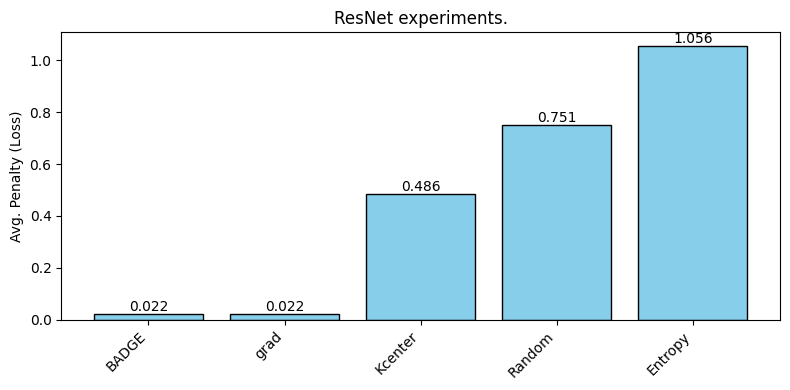}
        \caption*{(e)}
    \end{minipage}\hfill
    \begin{minipage}[b]{0.32\textwidth}
        \centering
        \includegraphics[width=\textwidth]{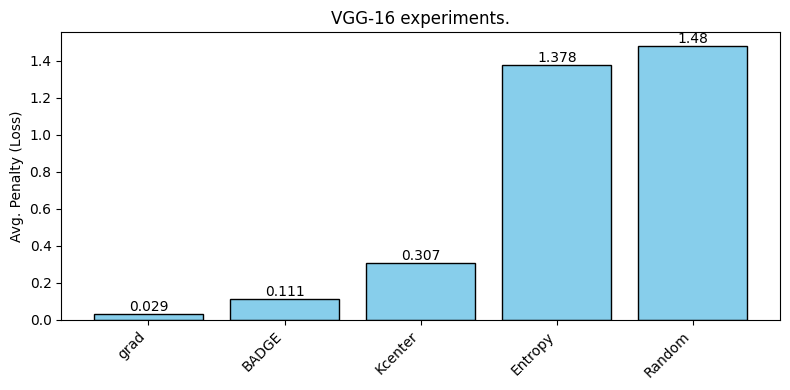}
        \caption*{(f)}
    \end{minipage}

    \caption{Overall comparison using the pairwise penalty matrix (top row) and the corresponding loss-score ranking (bottom row). (a,d) LeNet experiments; (b,e) ResNet experiments; (c,f) VGG-16 experiments. Larger PPM entries indicate more frequent statistically significant wins, while lower loss scores indicate stronger overall performance.}
    \label{fig:model}
\end{figure}
\begin{figure}[htbp]
    \centering
    \begin{minipage}[b]{0.32\textwidth}
        \centering
        \includegraphics[width=\textwidth]{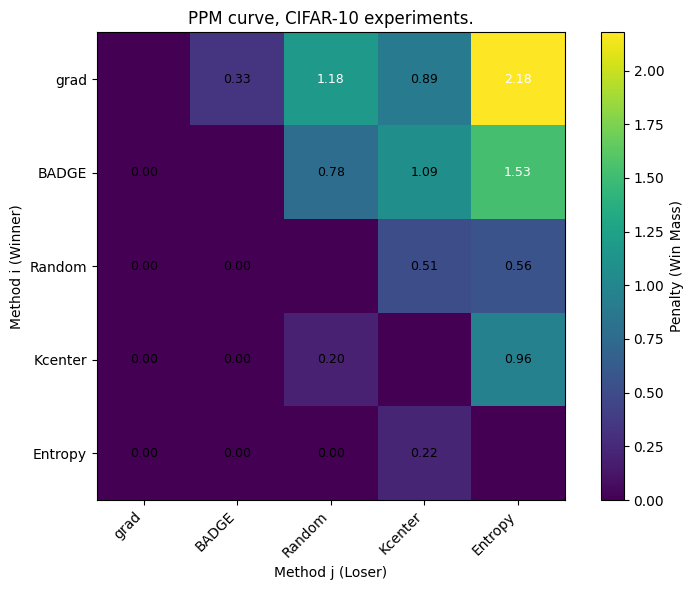}
        \caption*{(a)}
    \end{minipage}\hfill
    \begin{minipage}[b]{0.32\textwidth}
        \centering
        \includegraphics[width=\textwidth]{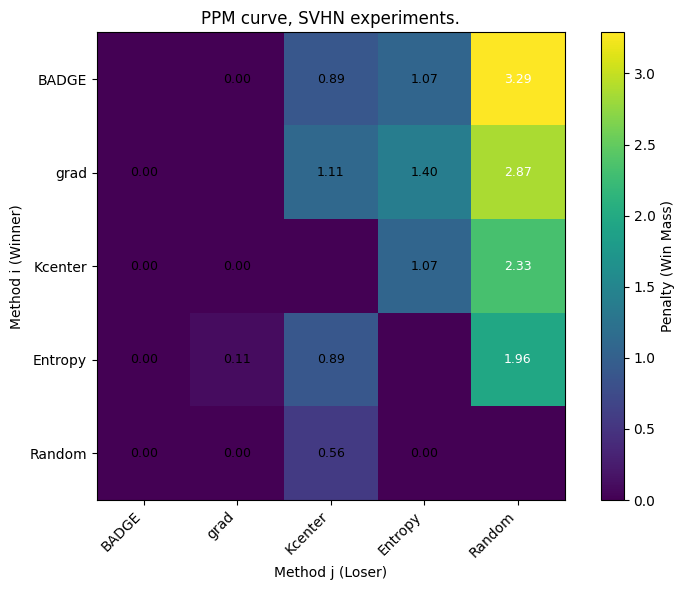}
        \caption*{(b)}
    \end{minipage}\hfill
    \begin{minipage}[b]{0.32\textwidth}
        \centering
        \includegraphics[width=\textwidth]{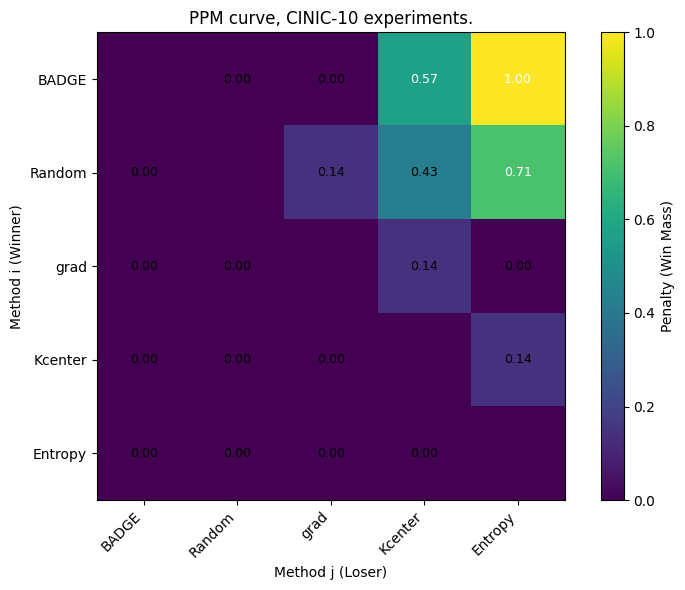}
        \caption*{(c)}
    \end{minipage}

    \vspace{0.4cm}

    \begin{minipage}[b]{0.32\textwidth}
        \centering
        \includegraphics[width=\textwidth]{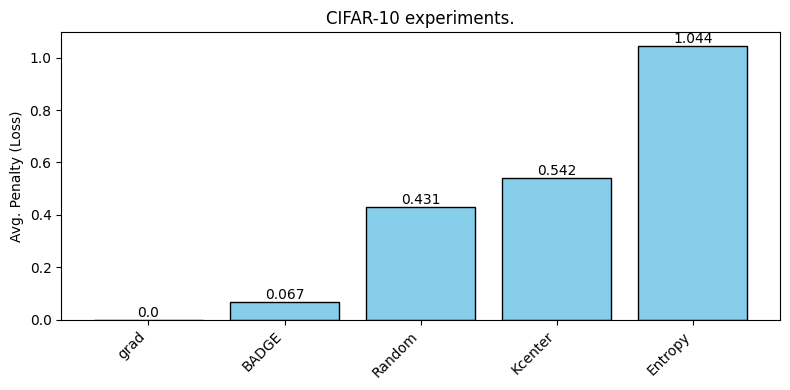}
        \caption*{(d)}
    \end{minipage}\hfill
    \begin{minipage}[b]{0.32\textwidth}
        \centering
        \includegraphics[width=\textwidth]{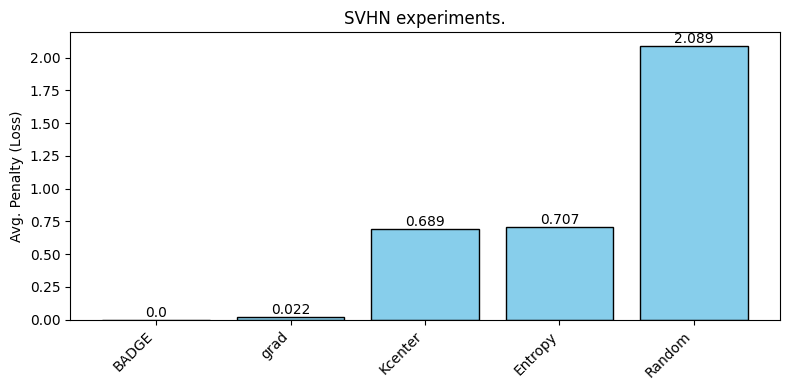}
        \caption*{(e)}
    \end{minipage}\hfill
    \begin{minipage}[b]{0.32\textwidth}
        \centering
        \includegraphics[width=\textwidth]{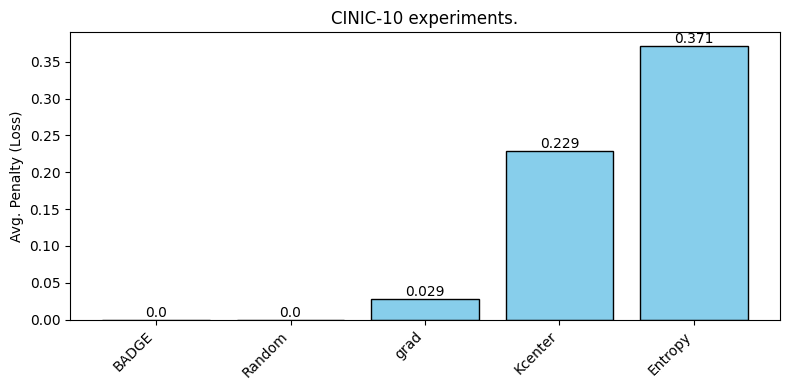}
        \caption*{(f)}
    \end{minipage}

    \caption{Overall comparison using the pairwise penalty matrix (top row) and the corresponding loss-score ranking (bottom row). (a,d) CIFAR-10 experiments; (b,e) SVHN experiments; (c,f) CINIC-10 experiments. Larger PPM entries indicate more frequent statistically significant wins, while lower loss scores indicate stronger overall performance.}
    \label{fig:ppm_dataset}
\end{figure}

\end{document}